\documentclass[sigconf]{acmart}

\usepackage{csquotes}
\usepackage{url}
\usepackage{amsmath}
\usepackage{mathtools}
\usepackage{csquotes}
\usepackage{enumitem}
\usepackage{xspace}
\usepackage{algorithmic}
\usepackage{algorithm}
\usepackage{subcaption}
\usepackage{array}
\usepackage{mathtools} 
\usepackage{comment}
\usepackage[group-separator={,}]{siunitx}
\usepackage{bm}
\usepackage{placeins}
\usepackage{amsthm}
\usepackage{xcolor}
\usepackage{soul}
\usepackage{multirow}
\usepackage{makecell}
\usepackage{wrapfig}

\AtBeginDocument{%
  \providecommand\BibTeX{{%
    \normalfont B\kern-0.5em{\scshape i\kern-0.25em b}\kern-0.8em\TeX}}}

\copyrightyear{2020}
\acmYear{2020}
\setcopyright{acmcopyright}\acmConference[CIKM '20]{Proceedings of the 29th ACM International Conference on Information and Knowledge Management}{October 19--23, 2020}{Virtual Event, Ireland}
\acmBooktitle{Proceedings of the 29th ACM International Conference on Information and Knowledge Management (CIKM '20), October 19--23, 2020, Virtual Event, Ireland}
\acmPrice{15.00}
\acmDOI{10.1145/3340531.3411898}
\acmISBN{978-1-4503-6859-9/20/10}



\begin{document}
\fancyhead{}

\title{Carpe Diem, Seize the Samples Uncertain \enquote{at the Moment} \\for Adaptive Batch Selection}

\author{Hwanjun Song}
\affiliation{%
\institution{Korea Advanced Institute of Science and Technology\\ Daejeon, Korea}
}
\email{songhwanjun@kaist.ac.kr}
\author{Minseok Kim}
\affiliation{%
\institution{Korea Advanced Institute of Science and Technology\\ Daejeon, Korea}
}
\email{minseokkim@kaist.ac.kr}
\author{Sundong Kim}
\affiliation{%
\institution{Institute for Basic Science\\ Daejeon, Korea}
}
\email{sundong@ibs.re.kr}
\author{Jae-Gil Lee}
\affiliation{%
\institution{Korea Advanced Institute of Science and Technology\\ Daejeon, Korea}
}
\email{jaegil@kaist.ac.kr}
\authornote{Jae-Gil Lee is the corresponding author.}


\begin{abstract}
The accuracy of deep neural networks is significantly affected by how well mini-batches are constructed during the training step.
In this paper, we propose a novel adaptive batch selection algorithm called \textbf{Recency~Bias} that exploits the uncertain samples predicted inconsistently \emph{in recent iterations}. The historical label predictions of each training sample are used to evaluate its predictive uncertainty within a \emph{sliding window}. Then, the sampling probability for the next mini-batch is assigned to each training sample in proportion to its predictive uncertainty. By taking advantage of this design, \algname{} not only accelerates the training step but also achieves a more accurate network. We demonstrate the superiority of \algname{} by extensive evaluation on two independent tasks. Compared with existing batch selection methods, the results showed that \algname{} reduced the test error by up to $20.97\%$ in a fixed wall-clock training time. At the same time, it improved the training time by up to $59.32\%$ to reach the same test error. 
\end{abstract}

\begin{CCSXML}
<ccs2012>
<concept>
<concept_id>10010147.10010257.10010293.10010294</concept_id>
<concept_desc>Computing methodologies~Neural networks</concept_desc>
<concept_significance>500</concept_significance>
</concept>
<concept>
<concept_id>10010147.10010257.10010258.10010259.10010263</concept_id>
<concept_desc>Computing methodologies~Supervised learning by classification</concept_desc>
<concept_significance>500</concept_significance>
</concept>
</ccs2012>
\end{CCSXML}

\ccsdesc[500]{Computing methodologies~Neural networks}
\ccsdesc[500]{Computing methodologies~Supervised learning by classification}

\keywords{Batch Selection, Uncertain Sample, Acceleration, Convergence}

\theoremstyle{definition}
\newcommand{\colorcomment}[3]{\xspace{\color{#2} [{#1}]:{#3}}\xspace}
\newcommand{\jaegil}[1]{\colorcomment{Jae-Gil}{blue}{#1}}
\newcommand{\hwanjun}[1]{\colorcomment{HJ}{purple}{#1}}
\newcommand{\sundong}[1]{\colorcomment{Sundong}{orange}{#1}}
\newcommand{\algname}{\emph{Recency Bias}}
\newcommand{\randombatch}{\emph{Random Batch}}
\newcommand{\activebias}{\emph{Active Bias}}
\newcommand{\onlinebatch}{\emph{Online Batch}}
\renewcommand{\algorithmicrequire}{\textsc{Input:}}
\renewcommand{\algorithmicensure}{\textsc{Output:}}
\renewcommand{\algorithmiccomment}[1]{/*~#1~*/}
\newcommand{\INDSTATE}[1][1]{\STATE\hspace{#1\algorithmicindent}}
\DeclarePairedDelimiter{\ceil}{\lceil}{\rceil}

\maketitle

\sloppy

\section{Introduction}
\label{sec:intro}

\begin{figure}[b!]
\vspace*{-0.4cm}
\begin{center}
\includegraphics[width=8.5cm]{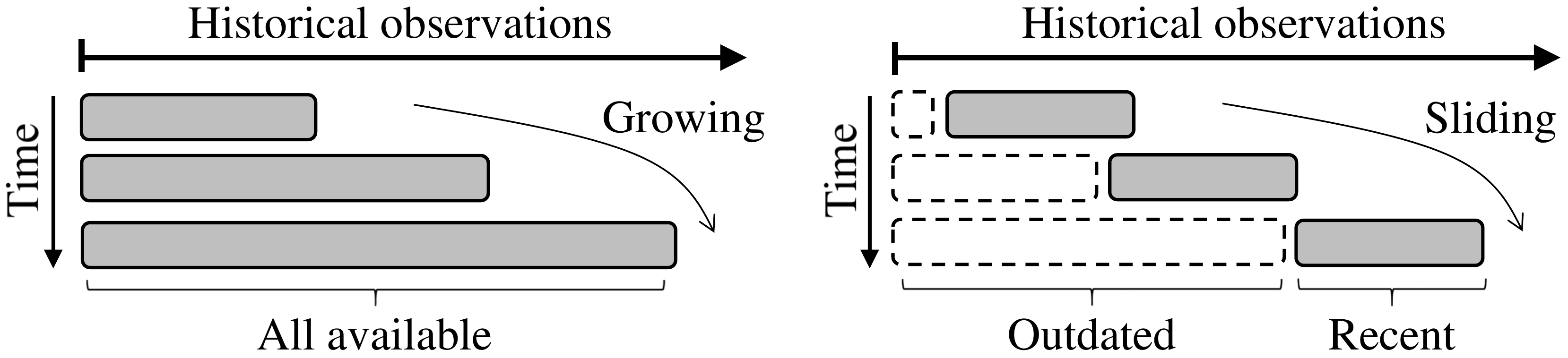}
\end{center}
\vspace*{-0.1cm}
\hspace*{0.2cm} {\small (a) Growing Window.} \hspace*{1.95cm} {\small (b) Sliding Window.}
\vspace*{-0.3cm}
\caption{Two forms of handling the time-series observations.}
\label{fig:time_series}
\end{figure}

Deep neural networks\,(DNNs) have become one of the most popular methods for supervised learning tasks in that traditional machine learning is successfully superseded by recent deep learning in many applications\,\cite{mei2018attentive}. However, in return for this higher popularity and capability, the training of DNNs raises many challenges mainly because of their very high expressive power as well as complex structure. Thus, in this paper, we address an important issue in the training of DNNs, namely \emph{batch selection}.

Stochastic gradient descent\,(SGD) for \emph{randomly} selected mini-batch samples is commonly used to train DNNs. However, many recent studies have pointed out that the performance of DNNs is heavily dependent on how well the mini-batch samples are selected\,\cite{shrivastava2016training,chang2017active,katharopoulos2018not, song2020ada}. In earlier approaches, a sample's \emph{difficulty} is employed to identify proper mini-batch samples, and these approaches achieve a more accurate and robust network\,\cite{han2018co, song2019selfie} or expedite the training convergence of SGD\,\cite{loshchilov2015online}. However, the two opposing difficulty-based strategies, i.e., preferring \emph{easy} samples\,\cite{kumar2010self,song2019selfie} versus \emph{hard} samples\,\cite{loshchilov2015online,shrivastava2016training}, work well in different situations.
The former results in a network robust to outliers and noisy labels, but slows down the training process by causing small gradients\,\cite{meng2015objective, song2019prestopping}; the latter accelerates the training process, but leads to poor generalization on test data by exacerbating the overfitting\,\cite{loshchilov2015online}.
Thus, for practical reasons to cover more diverse situations, recent approaches begin to exploit a sample's \emph{uncertainty} that indicates the consistency of previous predictions\,\cite{chang2017active, song2019selfie, song2020learning}. 

\begin{figure*}[t!]
\begin{center}
\includegraphics[width=16.0cm]{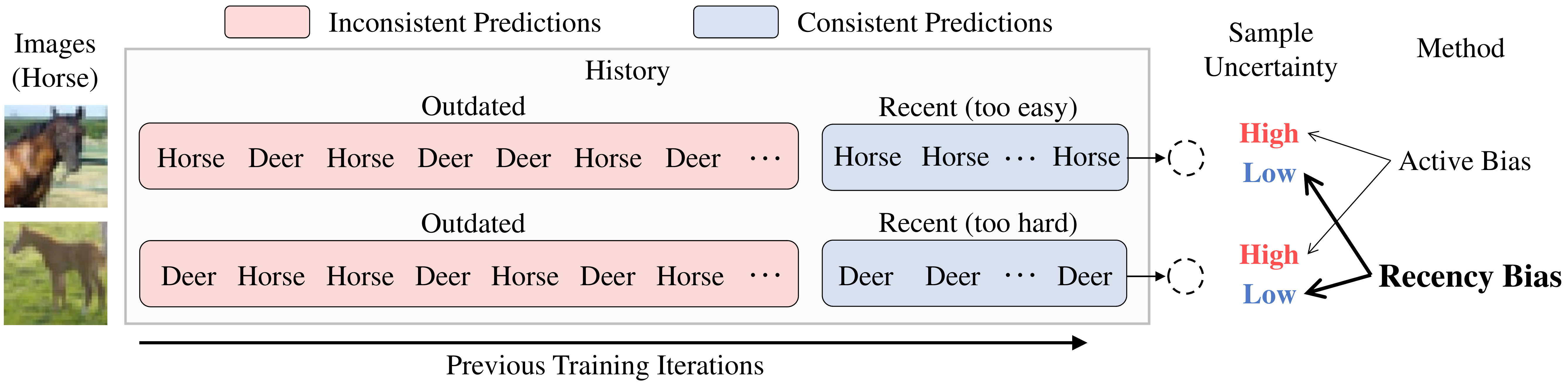}
\end{center}
\vspace*{-0.3cm}
\caption{The difference in sample uncertainty estimated by \activebias{} and \algname{}.}
\label{fig:differenc_btw_two_methods}
\vspace*{-0.3cm}
\end{figure*}

An important question here is how to evaluate a sample's uncertainty based on its historical predictions during the training process. Intuitively, because a series of historical predictions can be seen as a series of data indexed in chronological order, the uncertainty can be measured based on \emph{two} forms of handling time-series observations: \emph{(i)} a \emph{growing window}\,(Figure \ref{fig:time_series}(a)) that consistently increases the size of a window to use all available observations and \emph{(ii)} a \emph{sliding window}\,(Figure \ref{fig:time_series}(b)) that maintains a window of a fixed size on the most recent observations by deleting outdated ones. While the state-of-the-art algorithm, \activebias{}\,\cite{chang2017active}, adopts the growing window, we propose to use the sliding window in this paper. \looseness=-1

In more detail, \activebias{} recognizes uncertain samples based on the inconsistency of the predictions in the \emph{entire} history of past SGD iterations. Then, it emphasizes such uncertain samples by choosing them with high probability for the next mini-batch. However, according to our experiments presented in Section \ref{sec:eval_classification}, such uncertain samples \emph{slowed down} the convergence speed of training, though they ultimately reduced the generalization error. This weakness is attributed to the inherent limitation of the growing window, where older observations could be too outdated\,\cite{torgo2011data}. In other words, the outdated predictions no longer represent a network's current behavior. As illustrated in Figure \ref{fig:differenc_btw_two_methods}, when the label predictions of two samples were inconsistent for a long time, \activebias{} invariably regards them as highly uncertain, although their recent label predictions become consistent along with the network's training progress. This aspect evidently entails the risk of emphasizing uninformative samples---too easy or too hard---at the current moment, thereby slowing down the convergence speed of training.

Therefore, we propose a simple but effective batch selection method, called \textbf{Recency~Bias}, that takes advantage of the \emph{sliding window} to evaluate the uncertainty in \emph{fresher} observations. As opposed to \activebias{}, \algname{} excludes the outdated predictions by managing a sliding window of a fixed size and picks up the samples predicted inconsistently within the sliding window. Thus, as shown in Figure \ref{fig:differenc_btw_two_methods}, the two samples uninformative at the moment are no longer selected by \algname{} simply because their recent predictions are consistent. Consequently, since informative samples are effectively selected throughout the training process, this strategy not only accelerates the training speed but also leads to a more accurate network.


To validate the superiority of \algname{}, two popular convolutional neural networks\,(CNNs)
were trained for two independent tasks: image classification and fine tuning. We compared \algname{} with not only the random batch selection\,(baseline) but also \emph{Online Batch} and \emph{Active Bias}, which are the two state-of-the-art adaptive batch selection methods. Our extensive experiments empirically confirmed the following promising results:
\begin{itemize}[leftmargin=9pt]
\item \textbf{Uncertainty Estimation}: \algname{} benefited from using the sliding window in estimating the uncertainty. It evaluated truly uncertain samples during the entire training process, while the growing window approach\,(\emph{Active Bias}) misclassified many easy samples as uncertain at a late stage of training. \looseness=-1
\vspace*{0.05cm}
\item \textbf{Accuracy and Efficiency}: \algname{} provided a relative reduction in test error by up to $20.97\%$ in a fixed wall-clock training time compared with three batch selection strategies. At the same time, it significantly reduced the execution time by up to $59.32\%$ to reach the same test error.
\vspace*{0.05cm}
\item \textbf{Practicality}: The performance dominance of \algname{} was consistent on two learning tasks based on various benchmark datasets. The impact on both convergence speed and generalization capability have a great potential to improve many other deep learning tasks.
\end{itemize}

In the rest of the paper, Section \ref{sec:related_work} reviews related work, Section \ref{sec:recency_bias} proposes the \algname{} algorithm, Section \ref{sec:evaluation} presents the experiments, and Section \ref{sec:conclusion} concludes the paper.
\vspace*{-0.1cm}
\section{Related Work}
\label{sec:related_work}

SGD is a popular optimization method, which has been extensively studied in the machine learning community\,\cite{johnson2013accelerating, mahdavi2012stochastic, shamir2013stochastic, liu2019loopless}. Typically, at each training iteration, more than one training samples called a \emph{mini-batch} is constructed from the training dataset and then employed to update the model parameter such that this parameter minimizes the empirical risk on the mini-batch.
Let $\mathcal{D}=\{(x_i, y_i)\}_{i=1}^{N}$ be the entire training dataset composed of a sample $x_i$ with its true label $y_i$, where $N$ is the total number of training samples. Then, a straightforward strategy to construct a mini-batch $\mathcal{B}=\{(x_i, y_i)\}_{i=1}^{b}$ is to select $b$ samples \emph{uniformly at random}\,(i.e., $P(x_i|\mathcal{D})=1/N$) from the training dataset $\mathcal{D}$. 

Because not all samples have an equal impact on training, many research efforts have been devoted to develop advanced \emph{sampling schemes}. Bengio et al.~\shortcite{bengio2009curriculum} first took easy samples and then gradually increased the difficulty of samples using heuristic rules. Kumar et al.~\shortcite{kumar2010self} determined the easiness of the samples using their prediction errors. Recently, Tsvetkov et al.~\shortcite{tsvetkov2016learning} used Bayesian optimization to learn an optimal curriculum for training dense, distributed word representations. Sachan and Xing~\shortcite{sachan2016easy} emphasized that the right curriculum must introduce a small number of the samples dissimilar to those previously seen. Fan et al.~\shortcite{fan2017neural} proposed a neural data filter based on reinforcement learning to select training samples adaptively. However, it is common for deep learning to emphasize \emph{hard} samples because of the plethora of easy ones\,\cite{katharopoulos2018not}.

In light of this, Loshchilov and Hutter~\shortcite{loshchilov2015online} proposed a \emph{difficulty}-based sampling scheme, called \onlinebatch{}, that uses the rank of the loss computed from previous epochs. \onlinebatch{} sorts the previously computed losses of samples in descending order and exponentially decays the sampling probability of a sample according to its rank $r$. Then, the $r$-th ranked sample $x(r)$ is selected with the probability dropping by a factor of $\exp{\big(\log(s_e)/N\big)}$, where $s_e$ is the \emph{selection pressure} parameter that affects the probability gap between the most and the least important samples. When normalized to sum to $1.0$, the probability $P(x(r)|\mathcal{D};s_e)$ is defined by
\begin{equation}
\label{eq:online_batch_p}
P(x(r)|\mathcal{D};s_e) = \frac{1/\exp{\big(\log(s_e)/N\big)^r}}{\sum_{j=1}^{N}1/\exp{\big(\log(s_e)/N\big)^j}}.
\end{equation}
However, it has been reported that \onlinebatch{} accelerates the convergence of training loss but deteriorates the generalization on test data because of the overfitting to hard training samples\,\cite{loshchilov2015online}. Thus, as confirmed in our experiment in Section \ref{sec:eval_classification}, it only works well for easy datasets, where hard training samples rarely exist.

Most close to our work, Chang et al.~\shortcite{chang2017active} devised an \emph{uncertainty}-based sampling scheme, called \activebias{}, that chooses uncertain samples with high probability for the next batch. \activebias{} maintains the history $\mathcal{H}_{i}^{t-1}$ that stores \emph{all} $h(y_i|x_i)$ before the current iteration $t$\,(i.e., growing window), where $h(y_i|x_i)$ is the softmax probability of a given sample $x_i$ for its true label $y_i$. Then, it measures the uncertainty of the sample $x_i$ by computing the variance over \emph{all} $h(y_i|x_i)$ in $\mathcal{H}_{i}^{t-1}$ and draws the next mini-batch samples based on the normalized probability $P(x_i|\mathcal{D}, \mathcal{H}_{i}^{t-1}; \epsilon)$ defined by 
\begin{equation}
\label{eq:active_bias_p}
\begin{gathered}
P(x_i|\mathcal{D}, \mathcal{H}_{i}^{t-1}; \epsilon) = \frac{\hat{std}(\mathcal{H}_{i}^{t-1})+ \epsilon}{\sum_{j=1}^{N}\big(\hat{std}(\mathcal{H}_{j}^{t-1})+\epsilon\big)},\\
\text{where\,\,\,}\hat{std}(\mathcal{H}_{i}^{t-1})=\sqrt{var(\mathcal{H}_{i}^{t-1}) + \frac{var(\mathcal{H}_{i}^{t-1})^{2}}{|\mathcal{H}_{i}^{t-1}|-1}}
\end{gathered}
\end{equation}
and $\epsilon$ is a smoothness constant to prevent the low variance samples from never being selected again.
As mentioned earlier in Introduction, \activebias{} slows down the training process compared to random batch selection because the oldest part in the history $\mathcal{H}_{i}^{t-1}$ no longer represents the current behavior of the network.

For the completeness of the survey, we include the recent studies on submodular batch selection. Joseph et al.~\shortcite{joseph2019submodular} and Wang et al.~\shortcite{wang2019fixing} designed their own submodular objectives that cover diverse aspects, such as sample redundancy and sample representativeness, for more effective batch selection. Differently from their work, we explore {the issue of truly uncertain samples} in an orthogonal perspective. Our uncertainty measure can be easily injected into their submodular optimization framework as a measure of sample informativeness. \looseness=-1


\section{Batch Selection via \algname{}}
\label{sec:recency_bias}

\subsection{Problem Setting and Overview}
\label{sec:overview}

In the standard training of DNNs by  SGD, the parameter $\theta$ of the network is updated according to the descent direction of the expected empirical risk on the given mini-batch $\mathcal{B}$, 
\begin{equation}
\theta_{t+1} = \theta_{t} - \eta \nabla\big( \frac{1}{|\mathcal{B}|}\! \sum_{x_i \in \mathcal{B}} f_{x_i}(\theta_t) \big),
\end{equation}
where $\eta$ is the given learning rate and $f_{x_i}(\theta_t)$ is the empirical risk of the sample $x_i$ for the network parameterized by $\theta_t$. 

In this study, we modify the update rule to highlight uncertain samples during the training process. First, \algname{} manages a sliding window of a fixed size, called the \emph{label history} $\mathcal{H}$, which stores recent label predictions of all training samples. Second, \algname{} evaluates the sample uncertainty based on the label history and then builds a sampling distribution $P(x|\mathcal{D}, \mathcal{H} ;s_e)$ such that uncertain samples exhibit higher probabilities, where $s_e$ is the selection pressure parameter. Let $\mathcal{U}$ be the uncertain mini-batch samples drawn according to the sampling distribution built at the current moment. Then, the modified update rule is  
\begin{equation}
\begin{gathered}
\theta_{t+1} = \theta_{t} - \eta \nabla\big( \frac{1}{|\mathcal{U}|}\! \sum_{x_i \in \mathcal{U}} f_{x_i}(\theta_t) \big), \text{where\,\,} \mathcal{U} \!\sim \!P(x|\mathcal{D}, \mathcal{H}; s_e).
\end{gathered}
\label{eq:uncertain_update}
\vspace*{0.00cm}
\end{equation}
Please note that \algname{} iteratively refines the sampling distribution by using the latest label history \emph{once per epoch} because an epoch is a widely-used learning cycle to measure the model changes\,\cite{loshchilov2015online, goodfellow2016deep}.
To update the network by Eq.\ \eqref{eq:uncertain_update}, the main challenge is how to evaluate the uncertainty of training samples based on their label histories as well as how to assign the sampling probability for them to construct the next mini-batch. The two main challenges are detailed in the following section.

\subsection{Batch Selection Methodology}
\label{sec:methodology}

\subsubsection{Criterion of an Uncertain Sample}

Intuitively, samples are uncertain if their \emph{recent} label predictions are highly inconsistent because they are neither too easy nor too hard at the moment.
Thus, we adopt the \emph{predictive uncertainty}\,\cite{song2019selfie} in Definition\,\ref{def:predictive_uncertainty} that uses the information entropy\,\cite{chandler1987introduction} to measure the inconsistency of recent label predictions. Here, the sample with high predictive uncertainty is regarded as uncertain and thus should be selected with high probability for the next mini-batch. \looseness=-1

\begin{definition}{\bf (Predictive Uncertainty)}
\label{def:predictive_uncertainty}
Given a neural network $\Phi$ parameterized by $\theta$, let $\hat{y}_{t}=\Phi(x_i; \theta_{t})$ be the predicted label of a sample $x_i$ at time $t$ and $\mathcal{H}_{i}(q)=$ $\{\hat{y}_{t_{1}}, \hat{y}_{t_{2}},\dots,$ $\hat{y}_{t_{q}}\}$ be the label history of the sample $x_i$ that stores the predicted labels at the previous $q$ times. The label history $\mathcal{H}_{i}(q)$ corresponds to the \emph{sliding window} of size $q$ to compute the uncertainty of the sample $x_i$.
Next, based on $\mathcal{H}_{i}(q)$, the probability of the $j$-th label\,($j\in\{1,2,\ldots,k\}$) estimated as the label of the sample $x_i$ is formulated by
\vspace*{0.2cm}
\begin{equation}
\label{eq:label_prob}
p(y_i=j|x_i) = \frac{\sum_{\hat{y} \in \mathcal{H}_{i}(q)}[\hat{y}=j]}{|\mathcal{H}_{i}(q)|},
\vspace*{0.1cm}
\end{equation}
where $[\cdot]$ is the Iverson bracket\footnote{The Iverson bracket $[p]$ returns $1$ if $p$ is true; $0$ otherwise.}.
Then, to quantify the uncertainty of the sample $x_i$, the empirical entropy is used to define the \emph{predictive uncertainty} $U(x_i)$ by
\vspace*{0.1cm}
\begin{equation}
\label{eq:predictive_uncertainty}
\begin{gathered}
U(x_i)=\frac{1}{\log(k)} entropy\big(p(y_i|x_i)\big),\\
\text{where\,\,}entropy\big(p(y_i|x_i)\big) =-\!\sum_{j=1}^{k}p(y_i\!=\!j|x_i)\log{p(y_i\!=\!j|x_i)}.
\end{gathered}
\vspace*{0.1cm}
\end{equation}
\end{definition} 
For $k$ classes, to ensure $0 \leq U(x_i) \leq 1$, the empirical entropy is divided by $\log(k)$, which is the maximum value obtained when $\forall_{j\in\{1,\dots,k\}} p(y_i=j|x_i)=1/k$.

\subsubsection{Sampling Probability for Mini-batch Construction}
\label{sec:sample_prob}

To construct next mini-batch samples, we assign the sampling probability according to the predictive uncertainty in Definition \ref{def:predictive_uncertainty}.
Motivated by Loshchilov and Hutter~\shortcite{loshchilov2015online}, the sampling probability of a given sample $x_i$ is exponentially decayed with its predictive uncertainty $U(x_i)$. In detail, we adopt the quantization method\,\cite{chen2001quantization} and use the quantization index to decay the sampling probability. 
The index is obtained using the simple quantizer $Q(z)$ defined by 
\vspace*{0.1cm}
\begin{equation}
Q\big(z\big)= \ceil{\big(1-z\big) /\Delta},~~~~ 0\leq z \leq 1,
\label{eq:quantization_f}
\vspace*{0.1cm}
\end{equation}
where $\Delta$ is the quantization step size. Compared with the rank-based index\,\cite{loshchilov2015online}, the quantization index is known to well reflect the difference in actual values\,\cite{widrow1996statistical}.
Comparing two different epochs in Figure \ref{fig:index_analysis}, although the predictive uncertainty of the samples are changed between the two epochs, the rank-based approach ignores this change by assigning a fixed index for the same rank, whereas our quantization approach assigns different indexes every moment by reflecting the change in actual values.

\begin{figure}[t!]
\begin{center}
\includegraphics[width=8.4cm]{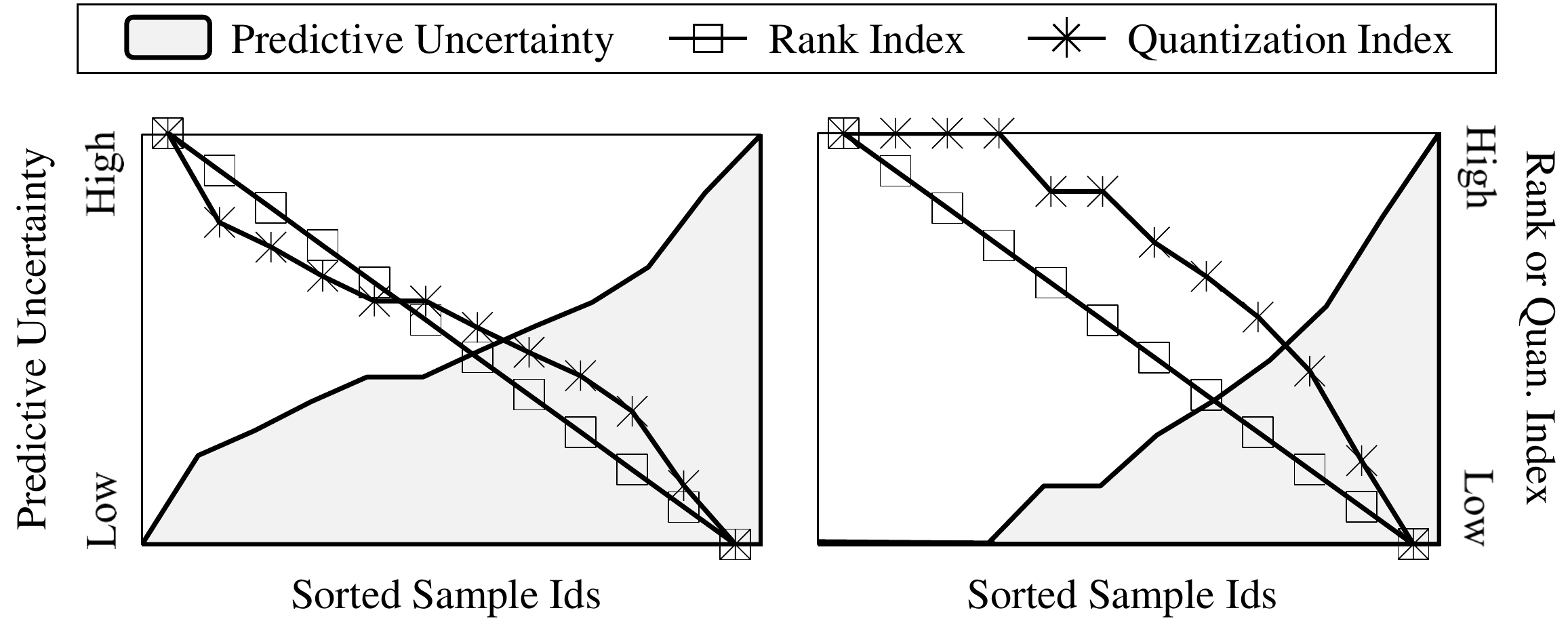}
\end{center}
\vspace*{-0.05cm}
\hspace*{0.1cm} {\small (a) Early Stage\,(30\%).}  \hspace*{1.1cm} {\small (b) Later Stage\,(70\%).}  
\vspace*{-0.25cm}
\caption{Comparison of the index computed by the rank-based and quantization-based approaches: (a) and (b) show the predictive uncertainty of training samples in CIFAR-100 with their indexes at the 30\% and 70\% of total training epochs, respectively.}
\label{fig:index_analysis}
\vspace*{-0.2cm}
\end{figure}

In Eq.~\eqref{eq:quantization_f}, we set $\Delta$ to be $1/N$ such that the index is bounded to $N$\,(the total number of samples).
Then, the sampling probability $P(x_i|\mathcal{D}, \mathcal{H}; s_e)$ is defined by 
\begin{equation}
\label{eq:exp_decay_q_index}
P(x_i|\mathcal{D}, \mathcal{H}; s_e) = \frac{1/\exp{\big(\log(s_e)/N\big)^{Q(U(x_i))}}}{\sum_{j=1}^{N}1/\exp{\big(\log(s_e)/N\big)^{Q(U(x_j))}}}.
\end{equation}
The higher the predictive uncertainty, the smaller the quantization index. Therefore, a higher sampling probability is assigned for uncertain samples by Eq.~\eqref{eq:exp_decay_q_index}.

Meanwhile, it is known that using only some part of training data exacerbates the overfitting problem at a late stage of training\,\cite{loshchilov2015online,zhou2018minimax}. 
Thus, to alleviate the problem, we include more training samples as the training progresses by exponentially decaying the selection pressure $s_e$ by
\begin{equation}
\label{eq:exp_decay_s_e}
s_e = s_{e_{0}}\Big({\rm \text{exp}}\big( \log{(1/s_{e_{0}})}/(e_{end}-e_{0}) \big)\Big)^{e-e_{0}}.
\end{equation}
At each epoch $e$ from $e_{0}$ to $e_{end}$, the selection pressure $s_e$ exponentially decreases from $s_{e_{0}}$ to $1$. 
Because this technique gradually reduces the sampling probability gap between the most and the least uncertain samples, more diverse samples are selected for the next mini-batch at a later epoch. When the selection pressure $s_e$ becomes $1$, the mini-batch samples are randomly chosen from the entire dataset. 
The ablation study on the effect of decaying the selection pressure is presented in Section \ref{sec:abalation}. 

\vspace*{0.15cm}
\subsection{Convergence Guarantee}
\label{sec:convergence}
\vspace*{0.05cm}

All adaptive batch selection strategies are widely known to follow the typical convergence guarantee of the vanilla SGD\,\cite{robbins1951stochastic}, as long as their sampling distributions are \emph{strictly positive} and their gradient estimates are \emph{unbiased}\,\cite{gopal2016adaptive, zhao2015stochastic}. Hence, we provide the theoretical evidence that \algname{} satisfies the two aforementioned conditions for the convergence guarantee.

\vspace*{+0.1cm}
\begin{lemma}
Let $P(x|\mathcal{D},\mathcal{H};s_e)$ be the sampling distribution of Recency Bias. Then, it is a strictly positive distribution.
\begin{proof}
By Eq.~\eqref{eq:predictive_uncertainty} and Eq.~\eqref{eq:quantization_f}, the index $Q(U(x))$ is bounded by
\begin{equation}
0 \leq Q(U(x)) \leq \ceil{1/\Delta}.
\end{equation}
Then, the lower bound of $P(x|\mathcal{D},\mathcal{H};s_e)$ is formulated by
\begin{equation}
0 < \frac{1}{\sum_{j=1}^{N}1/\exp{\big(\log(s_e)/N\big)^{Q(U(x_j))}}} \leq P(x|\mathcal{D},\mathcal{H};s_e).
\end{equation}
Thus, the sampling distribution of \emph{Recency Bias} is strictly positive because $P(x_i|\mathcal{D},\mathcal{H};s_e) > 0$ for all $x_i \in \mathcal{D}$.
\end{proof}
\label{lemma:positive}
\end{lemma}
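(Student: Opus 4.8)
The plan is to show that the probability mass that Eq.~\eqref{eq:exp_decay_q_index} assigns to every training sample is bounded strictly away from zero; since the mass is a ratio of a single exponential term to a finite sum of such terms, this reduces to two boundedness checks: (i) the quantization index $Q(U(x))$ is a finite nonnegative number for every sample, and (ii) the base $\exp\!\big(\log(s_e)/N\big)$ of the decaying exponential is a strictly positive real. Granting (i) and (ii), every numerator $1/\exp\!\big(\log(s_e)/N\big)^{Q(U(x_i))}$ is a strictly positive finite number, the denominator is a sum of $N$ such terms and is therefore strictly positive and finite, and hence $P(x_i\mid\mathcal{D},\mathcal{H};s_e)>0$ for all $x_i\in\mathcal{D}$, which is exactly the claim.

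For step (i), I would invoke Definition~\ref{def:predictive_uncertainty}: the normalization by $\log(k)$ in Eq.~\eqref{eq:predictive_uncertainty} forces $0\le U(x)\le 1$, so $0\le 1-U(x)\le 1$. With the choice $\Delta=1/N$ and $N$ finite, the quantizer of Eq.~\eqref{eq:quantization_f} gives $0\le Q(U(x))\le \ceil{1/\Delta}=N$; in particular $Q(U(x))$ is a finite integer, so the exponent in Eq.~\eqref{eq:exp_decay_q_index} is never infinite. This reproduces the bound $0\le Q(U(x))\le\ceil{1/\Delta}$ stated in the lemma.

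For step (ii), I would note that the selection pressure stays positive for the entire training horizon: by Eq.~\eqref{eq:exp_decay_s_e}, $s_e$ is a product of $s_{e_0}>0$ and a real power of an exponential, and both factors are strictly positive; hence $\log(s_e)$ is a finite real, $\exp\!\big(\log(s_e)/N\big)>0$, and raising it to the finite nonnegative power $Q(U(x))$ keeps it finite and positive. Combining the two steps, I would record the explicit lower bound $P(x_i\mid\mathcal{D},\mathcal{H};s_e)\ \ge\ 1\big/\sum_{j=1}^{N}1/\exp\!\big(\log(s_e)/N\big)^{Q(U(x_j))}\ >\ 0$, completing the argument.

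There is no genuine obstacle here: the statement is a routine positivity check, and the only thing requiring attention is the bookkeeping that guarantees finiteness and positivity of the quantities involved — namely that $\Delta>0$ and $U(x)\le 1$ keep $Q(U(x))$ finite, and that Eq.~\eqref{eq:exp_decay_s_e} keeps $s_e$ (hence the exponential base) strictly positive throughout training. Once those are observed, strict positivity follows because the exponential of any real number is positive and a finite sum of positive numbers is positive.
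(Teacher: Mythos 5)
Your proposal is correct and follows essentially the same route as the paper's own proof: bound the quantization index via $0 \leq Q(U(x)) \leq \ceil{1/\Delta}$ using Definition~\ref{def:predictive_uncertainty} and Eq.~\eqref{eq:quantization_f}, then observe that every term in Eq.~\eqref{eq:exp_decay_q_index} is strictly positive, yielding the same explicit lower bound on $P(x_i|\mathcal{D},\mathcal{H};s_e)$. The only difference is that you make explicit the positivity of $s_e$ from Eq.~\eqref{eq:exp_decay_s_e}, a detail the paper leaves implicit.
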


\begin{lemma}
Let $\tilde{G}$ be the gradient estimate of Recency Bias. Then, $\tilde{G}$ is an unbiased estimator.
\begin{proof}
Let $f_{x_i}(\theta)$ be the empirical risk of the sample $x_i$ for the given network parameterized by $\theta$. Then, because the mini-batch samples in \algname{} are drawn according to $P(x|\mathcal{D}, \mathcal{H};s_e)$, the gradient estimate $\tilde{G}$ is equivalent to that of the weighted empirical risk on the mini-batch $\mathcal{B}$ randomly drawn from $\mathcal{D}$\,\cite{gopal2016adaptive}, \looseness=-1
\begin{equation}
\tilde{G} = \nabla \sum_{x_i \in \mathcal{B}}w(x_i)f_{x_i}(\theta), 
~~~~w(x_i) = \frac{P(x_i|\mathcal{D}, \mathcal{H};s_e)}{\sum_{x_j \in \mathcal{B}}P(x_j|\mathcal{D}, \mathcal{H};s_e)}.
\end{equation}
Here, $w(x_i)$ is a constant value by Eq.~\eqref{eq:exp_decay_q_index}, and therefore the estimate $\tilde{G}$ is rewritten as
\begin{equation}
\tilde{G} = \sum_{x_i \in \mathcal{B}}w(x_i)\nabla f_{x_i}(\theta), \text{\,\,\,where\!}\sum_{x_i \in \mathcal{B}}w(x_i)=1.
\end{equation}
That is, $\tilde{G}$ is the weighted combination of the gradient estimate  $\nabla f_{x_i}(\theta)$ of each training sample $x_i$, which is an unbiased estimate of the true gradient,
\begin{equation}
\mathbb{E}[\nabla f_{x_i}(\theta)] = \sum_{x_i \in \mathcal{D}}\frac{1}{|\mathcal{D}|}\nabla f_{x_i}(\theta) = \frac{1}{|\mathcal{D}|} \nabla \sum_{x_i \in \mathcal{D}}f_{x_i}(\theta).
\end{equation}
Thus, the gradient estimate of \emph{Recency Bias} naturally becomes an unbiased estimator due to the linearity of expectation.
\end{proof}
\label{lemma:unbias}
\end{lemma}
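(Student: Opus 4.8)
The plan is to show that the expectation of $\tilde{G}$, taken over the randomness in the mini-batch construction of \algname{}, coincides (possibly up to a fixed positive scaling that is absorbed into the learning rate) with the full-batch gradient $\frac{1}{|\mathcal{D}|}\nabla\sum_{x_i\in\mathcal{D}}f_{x_i}(\theta)$, which is exactly the condition that the standard SGD convergence analysis requires. The first step is to rewrite the adaptive draw as an importance-weighted version of ordinary uniform mini-batch sampling: drawing samples with probability proportional to $P(x_i|\mathcal{D},\mathcal{H};s_e)$ and then averaging the raw per-sample gradients is, in expectation, the same as drawing a mini-batch $\mathcal{B}$ uniformly at random from $\mathcal{D}$ and forming the reweighted objective $\nabla\sum_{x_i\in\mathcal{B}}w(x_i)f_{x_i}(\theta)$ with $w(x_i)=P(x_i|\mathcal{D},\mathcal{H};s_e)/\sum_{x_j\in\mathcal{B}}P(x_j|\mathcal{D},\mathcal{H};s_e)$. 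This is the standard reduction used for adaptive SGD~\shortcite{gopal2016adaptive}, and it is well-defined precisely because Lemma~\ref{lemma:positive} guarantees that the normalizing denominator is strictly positive, so every weight is finite.

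Next I would observe that, for a fixed label history $\mathcal{H}$ and selection pressure $s_e$, each weight $w(x_i)$ determined by Eq.~\eqref{eq:exp_decay_q_index} is a deterministic constant with respect to the network parameter $\theta$: it depends only on the quantized predictive uncertainties $Q(U(x_j))$, which are computed from the label history and refreshed only once per epoch, not from $\theta$ itself. Consequently the gradient operator commutes past the weights, yielding $\tilde{G}=\sum_{x_i\in\mathcal{B}}w(x_i)\nabla f_{x_i}(\theta)$, and the weights are normalized so that $\sum_{x_i\in\mathcal{B}}w(x_i)=1$. Hence $\tilde{G}$ is a convex combination of the individual per-sample gradient estimates $\nabla f_{x_i}(\theta)$.

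Then the conclusion follows from the linearity of expectation: a uniformly drawn sample $x_i$ satisfies $\mathbb{E}[\nabla f_{x_i}(\theta)]=\sum_{x_i\in\mathcal{D}}\frac{1}{|\mathcal{D}|}\nabla f_{x_i}(\theta)=\frac{1}{|\mathcal{D}|}\nabla\sum_{x_i\in\mathcal{D}}f_{x_i}(\theta)$, so each term in the convex combination is an unbiased estimate of the same true gradient, and therefore so is their weighted sum. This gives $\mathbb{E}[\tilde{G}]\propto\nabla\sum_{x_i\in\mathcal{D}}f_{x_i}(\theta)$, establishing the claim.

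I expect the only delicate point to be the first step — rigorously justifying that the adaptive draw may be replaced by the reweighted uniform draw without introducing bias. Care is needed that the weights use the within-mini-batch normalization rather than the global one, and that $P(\cdot)>0$ everywhere so that no weight blows up; both are handled by invoking the adaptive-SGD reduction together with Lemma~\ref{lemma:positive}. The remaining steps — pulling $\nabla$ through the $\theta$-independent weights, using $\sum_{x_i\in\mathcal{B}}w(x_i)=1$, and applying linearity of expectation over a uniform draw — are routine and can be stated in a few lines.
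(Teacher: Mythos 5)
Your proposal follows essentially the same route as the paper's proof: the same reduction of the adaptive draw to a reweighted uniform mini-batch via the adaptive-SGD argument of \shortcite{gopal2016adaptive}, the same observation that the weights $w(x_i)$ are $\theta$-independent constants summing to one so that $\nabla$ commutes past them, and the same conclusion by linearity of expectation over a uniform draw. Your additional remarks --- invoking Lemma~\ref{lemma:positive} to ensure the weights are well-defined and noting the fixed scaling absorbed into the learning rate --- are sensible refinements but do not change the argument.
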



\begin{figure*}[ht!]
\begin{center}
\includegraphics[width=15.0cm]{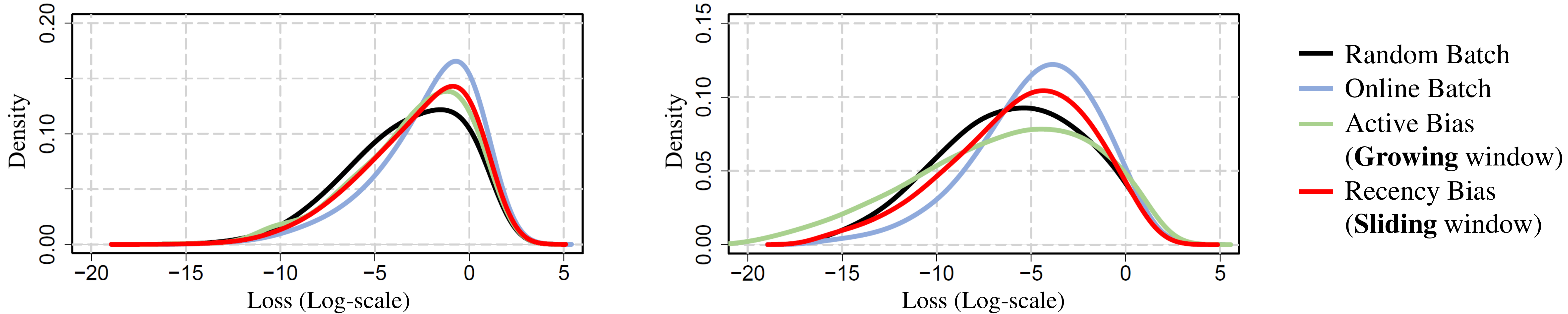}
\end{center}
\vspace*{-0.03cm}
\hspace*{0.7cm} {\small (a) Early Stage\,(30\%).} \hspace*{3.73cm} {\small (b) Late Stage\,(70\%). \hspace*{3.1cm}}
\vspace*{-0.24cm}
\caption{The loss distribution of mini-batch samples selected by four batch selection methods: (a) and (b) show the loss distribution at the $30\%$ and $70\%$ of total training epochs, respectively.}
\label{fig:loss_distribution}
\vspace*{-0.15cm}
\end{figure*}

\begin{theorem}
The SGD by Recency Bias is guaranteed to converge.
More specifically, supposing that $\theta_{*} = \text{\rm argmin}_{\theta} f_{\mathcal{D}}(\theta)$, where $f_{\mathcal{D}}(\theta)$ is the empirical risk of the model parameterized by $\theta$ for all $x_i \in \mathcal{D}$, the empirical risk of Recency Bias at iteration $t$ satisfies the convergence rate of $O(1/\sqrt{t})$\,\cite{shamir2013stochastic},\looseness=-1
\begin{equation}
\mathbb{E}[f_{\mathcal{D}}(\theta_{t})]-f_{\mathcal{D}}(\theta_{*}) = O(1/\sqrt{t}).
\end{equation}
\end{theorem}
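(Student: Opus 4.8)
The plan is to reduce the statement to the classical convergence theorem for vanilla SGD, using Lemmas~\ref{lemma:positive} and~\ref{lemma:unbias} as the two bridging conditions. First I would recall the template of Shamir and Zhang~\shortcite{shamir2013stochastic}: if $f_{\mathcal{D}}$ is convex, the stochastic gradient used at each step is an unbiased estimator of a (sub)gradient of $f_{\mathcal{D}}$, and its second moment is uniformly bounded, then SGD with step size $\eta_t = \Theta(1/\sqrt{t})$ (or with suffix averaging) satisfies $\mathbb{E}[f_{\mathcal{D}}(\theta_t)] - f_{\mathcal{D}}(\theta_*) = O(1/\sqrt{t})$. The task is thus to verify that the Recency Bias update in Eq.~\eqref{eq:uncertain_update} instantiates this template.

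Second, I would make precise the equivalence already sketched in the proof of Lemma~\ref{lemma:unbias}: drawing a mini-batch $\mathcal{U}$ from $P(x|\mathcal{D},\mathcal{H};s_e)$ and averaging the gradients is equivalent to drawing a uniform mini-batch $\mathcal{B}$ and forming the reweighted gradient $\tilde{G} = \sum_{x_i \in \mathcal{B}} w(x_i)\nabla f_{x_i}(\theta)$ with the weights $w(x_i)$ of Lemma~\ref{lemma:unbias}, and that $\mathbb{E}[\tilde{G}]$ is proportional to $\nabla f_{\mathcal{D}}(\theta)$. The point I would emphasize here is a timing subtlety: because the sampling law depends only on the label history $\mathcal{H}$, which is refreshed once per epoch and is fixed before the fresh mini-batch is drawn, the estimator is \emph{conditionally} unbiased given the current parameter $\theta_t$, which is exactly what the theorem requires.

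Third, I would discharge the remaining regularity hypotheses. Strict positivity from Lemma~\ref{lemma:positive} guarantees that every sample is drawn with probability at least $p_{\min} := \big(\sum_{j=1}^{N} 1/\exp(\log(s_e)/N)^{Q(U(x_j))}\big)^{-1} > 0$, hence the importance weights $w(x_i)$ are uniformly bounded; combined with the standard assumption (inherited from the cited vanilla-SGD result) that each $\|\nabla f_{x_i}(\theta)\|$ is bounded on the relevant region, this yields a uniform bound on $\mathbb{E}\|\tilde{G}\|^2$, supplying the bounded–second–moment hypothesis. Since the selection pressure stays in $[1, s_{e_0}]$ throughout training by Eq.~\eqref{eq:exp_decay_s_e}, $p_{\min}$ is bounded away from $0$ uniformly in $t$, so the bound does not degrade over time; and once $s_e = 1$ the scheme is literally uniform SGD, so the tail of the trajectory satisfies every hypothesis trivially. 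With unbiasedness, strict positivity, and the bounded second moment in hand, invoking the Shamir–Zhang theorem delivers the claimed $O(1/\sqrt{t})$ rate.

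The main obstacle I anticipate is not any single computation but the honest accounting of the ambient assumptions: the $O(1/\sqrt{t})$ rate is a statement about convex $f_{\mathcal{D}}$ with bounded stochastic (sub)gradients under a prescribed step-size schedule, none of which holds unconditionally for a deep network. The cleanest resolution is to state explicitly that the theorem is understood under the same regularity assumptions as the cited vanilla-SGD convergence result, and to make clear that Recency Bias contributes precisely the two nontrivial ingredients — a strictly positive, time-uniformly lower-bounded sampling law (Lemma~\ref{lemma:positive}) and a conditionally unbiased gradient estimate (Lemma~\ref{lemma:unbias}) — that are otherwise lost when one departs from uniform sampling, so that no part of the standard analysis breaks.
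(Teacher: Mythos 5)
Your proposal takes the same route as the paper: the paper's entire proof of this theorem is the single line ``Done by Lemma \ref{lemma:positive} and Lemma \ref{lemma:unbias},'' i.e., a direct appeal to the folklore that strictly positive sampling plus an unbiased gradient estimate inherits the $O(1/\sqrt{t})$ rate of the cited vanilla-SGD result. You reach the same conclusion by the same reduction, but you additionally verify the bounded-second-moment condition via the uniform lower bound on the sampling probabilities and you state explicitly the convexity and bounded-gradient hypotheses that the paper leaves implicit, which makes your version strictly more careful than (though not different in approach from) the paper's.
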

\begin{proof}
Done by Lemma \ref{lemma:positive} and Lemma \ref{lemma:unbias}.
\end{proof}
 

\subsection{Algorithm Pseudocode}
\label{sec:algorithm}

\algsetup{linenosize=\small}\newlength{\oldtextfloatsep}\setlength{\oldtextfloatsep}{\textfloatsep}
\setlength{\textfloatsep}{11.40pt}
\begin{algorithm}[!t]
\caption{\algname{} Algorithm}
\label{alg:proposed_algorithm}
\begin{algorithmic}[1]
\REQUIRE { $\mathcal{D}$: data, $epochs$, $b$: batch size, $q$: window size, $s_{e_{0}}$: initial selection pressure, $\gamma$: warm-up}
\ENSURE { $\theta_t$: model parameter}
\STATE {$t \leftarrow 1;$}~~{$\theta_{t} \leftarrow \text{Initialize the model parameter};$}
\STATE {$\mathcal{H} \leftarrow  \text{Initialize the label history of size } q;$}
\STATE {{\bf for} $i=1$ {\bf to} $epochs$ {\bf do}}
\INDSTATE[1] {\COMMENT{{\bf Sampling Probability Derivation}}}
\INDSTATE[1] {{\bf if} $i > \gamma$ {\bf then} }
\INDSTATE[2] {$s_e \leftarrow$ Decay\_Pressure($s_{e_{0}}$, $i$);} ~~{\COMMENT{Decaying $s_e$ by Eq.\,(\ref{eq:exp_decay_s_e})}}\!
\INDSTATE[2] {\COMMENT{Update the index and sampling probability in a batch}}\!\!\!
\INDSTATE[2] {{\bf for} $m=1$ {\bf to} $N$ {\bf do} }
\INDSTATE[3] {$q\_dict[x_m]=Q\big(U(x_m)\big);$ ~~\COMMENT{By Eq.\,(\ref{eq:quantization_f})}} 
\INDSTATE[2] {$p\_table \leftarrow$ Compute\_Prob($q\_dict$, $s_e$);~~\COMMENT{By Eq.\,(\ref{eq:exp_decay_q_index})}}
\INDSTATE[1] {\COMMENT{\bf Network Training}}
\INDSTATE[1] {{\bf for} $j=1$ {\bf to} $N/b$ {\bf do}}
\INDSTATE[2] {{\bf if} $i \leq \gamma$ {\bf then} ~~\COMMENT{Warm-up}}
\INDSTATE[3] {$\{(x_l,y_l)\}_{l=1}^{b} \leftarrow $ Randomly select mini-batch samples;}\!\!
\INDSTATE[2] {{\bf else}~~\COMMENT{Adaptive batch selection}}
\INDSTATE[3] {$\{(x_l,y_l)\}_{l=1}^{b} \leftarrow$ Select mini-batch samples by $p\_table$;
\INDSTATE[2] $losses, labels \leftarrow$ Inference($\{(x_l,y_l)\}_{l=1}^{b}$,$\theta_{t}$);~~\COMMENT{Forward}}\!\!
\INDSTATE[2] {$\theta_{t+1} \leftarrow$ SGD($losses$, $\theta_{t}$);~~\COMMENT{Backward}}
\INDSTATE[2] {\COMMENT{History update using a sliding window}}
\INDSTATE[2] {$\mathcal{H} \leftarrow$ Update\_Label\_History($\mathcal{H}, labels$);}
\INDSTATE[2] {$t \leftarrow t+1;$}
\STATE {\bf return} $\theta_{t}$;
\end{algorithmic}
\end{algorithm} 

Algorithm \ref{alg:proposed_algorithm} describes the overall procedure of \algname{}. The algorithm requires a warm-up period of $\gamma$ epochs because the quantization index for each sample is not confirmed yet. During the warm-up period, which should be at least $q$ epochs\,($\gamma \geq q$) to obtain the label history of size $q$, randomly selected mini-batch samples are used for the network update\,(Lines 13--14). 
After the warm-up period, the algorithm decays the selection pressure $s_e$ and updates not only the quantization index but also the sampling probability in a batch at the beginning of each epoch\,(Lines 5--10). Subsequently, the uncertain samples are selected for the next mini-batch according to the updated sampling probability\,(Lines 15--16), and then the label history is updated along with the network update\,(Lines 17--21). The entire procedure repeats for a given number of epochs. \looseness=-1
\newline \indent
Overall, the key technical novelty of \algname{} is to incorporate the notion of a \emph{sliding window} rather than a growing window into adaptive batch selection, thereby improving both training speed and generalization error. 

\vspace*{0.1cm}
\noindent \textbf{\underline{Time Complexity}}:
The main ``additional'' cost of \algname{} is the derivation of the sampling probability for each sample\,(Lines 5--10). Because only simple mathematical operations are needed per sample, its time complexity is linear to the number of samples\,(i.e., $O(N)$), which is negligible compared with that of the forward and backward steps of a complex network\,(Lines 17--18). Therefore, we contend that \algname{} does \emph{not} add the complexity of an underlying optimization algorithm.

\newcolumntype{L}[1]{>{\raggedright\let\newline\\\arraybackslash\hspace{0pt}}m{#1}}
\newcolumntype{X}[1]{>{\centering\let\newline\\\arraybackslash\hspace{0pt}}p{#1}}

\begin{table}[t!]
\caption{Summary of datasets used for the two independent tasks in Sections \ref{sec:eval_classification} and \ref{sec:eval_fine_tuning}.}
\vspace*{-0.2cm}
\begin{center}
\begin{tabular}{|L{1.85cm} ||X{1.17cm} |X{1.17cm}|X{1.17cm}|X{1.17cm}|}\hline
Dataset & \!\!\# Training\!\! & \!\!\# Testing\!\! & \!\!\# Classes\!\! & \!\!Resolution\!\! \\ \hline\hline
\!MNIST\,\cite{lecun1998mnist}\!\! & 60,000 & 10,000 &  10 & 28$\times$28 \\\hline
\!CIFAR-10\,\cite{krizhevsky2014cifar}\!\! & 50,000 & 10,000 & 10 & 32$\times$32 \\\hline
\!CIFAR-100\,\cite{krizhevsky2014cifar}\!\! & 50,000 & 10,000 & 100 & 32$\times$32 \\\hline
\!MIT-67\,\cite{quattoni2009recognizing}\!\! & 5,360 & 1,340 & 67 & 256$\times$256 \\\hline
\!FOOD-100\,\cite{kawano14b}\!\! & 7,000 & 3,000 & 100 & 256$\times$256 \\\hline
\end{tabular}
\end{center}
\label{table:datasets}
\vspace*{+0.1cm}
\end{table}

{
\newcommand{\spacebetweenfigs}{-0.45cm}
\newcommand{\spacebeforecaption}{0.2cm}

\begin{figure*}[t]
\advance\leftskip-0.3cm 
\begin{subfigure}[t!]{1.0\textwidth}
\begin{center}
\includegraphics[width=0.7\textwidth]{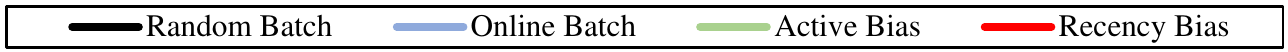}
\vspace*{0.1cm}
\end{center}
\end{subfigure}
\begin{center}
\begin{subfigure}[t!]{0.34\textwidth}
\includegraphics[width=1.0\textwidth]{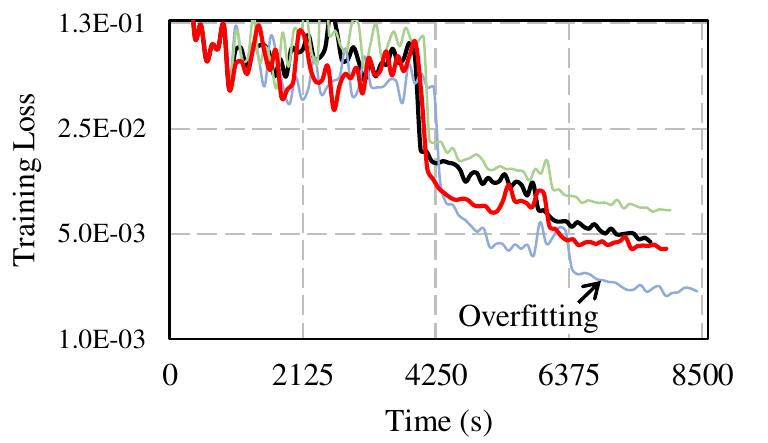}
\end{subfigure}
\hspace*{\spacebetweenfigs}
\begin{subfigure}[t!]{0.34\textwidth}
\includegraphics[width=1.0\textwidth]{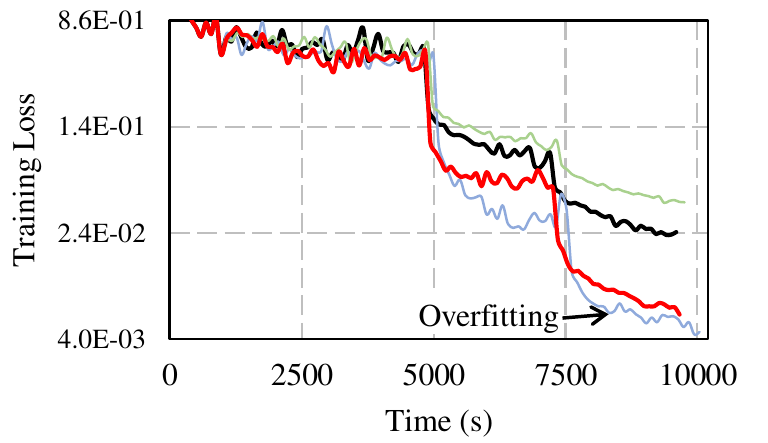}
\end{subfigure}
\hspace*{\spacebetweenfigs}
\begin{subfigure}[t!]{0.34\textwidth}
\includegraphics[width=1.0\textwidth]{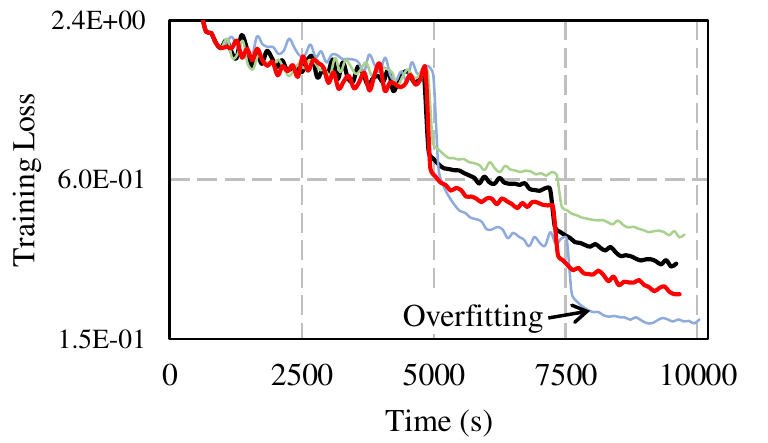}
\end{subfigure}
\end{center}
\hspace*{1.3cm} {\small (a) MNIST Training Loss.} \hspace*{2.5cm} {\small (b) CIFAR-10 Training Loss.} \hspace*{2.4cm} {\small (c) CIFAR-100 Training Loss.}
\vspace*{0.1cm}
\begin{center}
\begin{subfigure}[t!]{0.34\textwidth}
\includegraphics[width=1.0\textwidth]{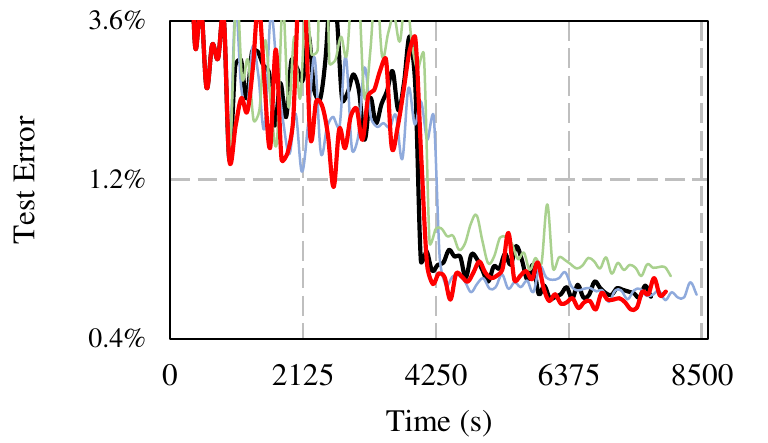}
\end{subfigure}
\hspace*{\spacebetweenfigs}
\begin{subfigure}[t!]{0.34\textwidth}
\includegraphics[width=1.0\textwidth]{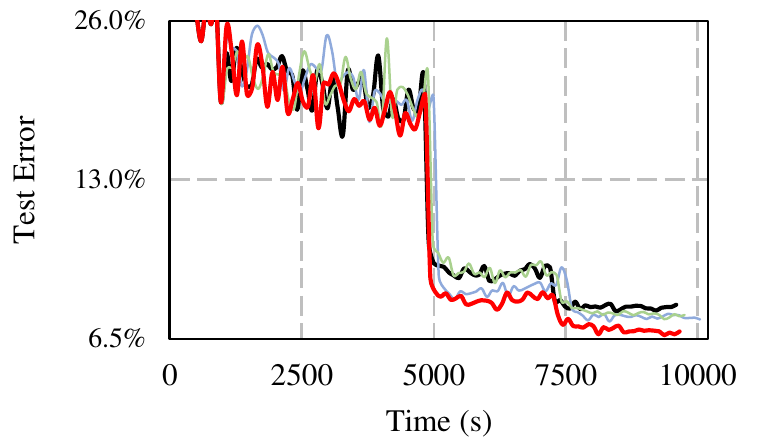}
\end{subfigure}
\hspace*{\spacebetweenfigs}
\begin{subfigure}[t!]{0.34\textwidth}
\includegraphics[width=1.0\textwidth]{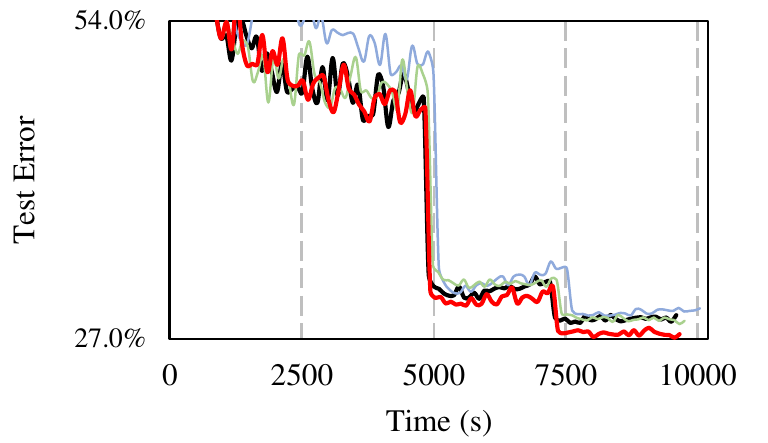}
\end{subfigure}
\end{center}
\hspace*{1.25cm} {\small (d) MNIST Test Error.} \hspace*{3.00cm} {\small (e) CIFAR-10 Test Error.}  \hspace*{2.8cm}  {\small (f) CIFAR-100 Test Error.}
\vspace*{-0.2cm}
\caption{Convergence curves of four batch selection methods using \enquote{DenseNet with momentum} (log-scale).}
\vspace*{-0.2cm}
\label{fig:densenet_momentum}
\end{figure*}
}

\begin{table*}[ht!]
\vspace*{+0.1cm}
\caption{The best test errors (\%) of four batch selection methods using {\bf DenseNet}.}
\vspace*{-0.3cm}
\begin{center}
\begin{tabular}{|L{2.2cm} ||X{2.0cm} |X{2.0cm} |X{2.0cm}|| X{2.0cm} |X{2.0cm} |X{2.0cm}|}\hline
Optimizer &  \multicolumn{3}{|c||}{Momentum in Figure \ref{fig:densenet_momentum}} & \multicolumn{3}{|c|}{SGD in Figure \ref{fig:densenet_sgd} (Appendix \ref{sec:generalization})}\\\hline
Method  &  \!\!MNIST\!\! &\!\!CIFAR-10\!\! & \!\!CIFAR-100\!\! & \!\!MNIST\!\! &\!\!CIFAR-10\!\! & \!\!CIFAR-100\!\! \\\hline\hline
\!{\randombatch{}}\!\!\! & \!\!0.53 $\pm$ 0.03\!\! & \!\!7.33 $\pm$ 0.09\!\! & \!\!27.95 $\pm$ 0.16\!\! & \!\!1.23 $\pm$ 0.03\!\! & \!\!14.86 $\pm$ 0.09\!\! & \!\!40.15 $\pm$ 0.06\!\! \\\hline
\!{\onlinebatch{}}\!\!\! & \!\!0.51 $\pm$ 0.01\!\! & \!\!7.00 $\pm$ 0.10\!\! & \!\!28.39 $\pm$ 0.25\!\! & \!\!0.77 $\pm$ 0.02\!\! & \!\!13.52 $\pm$ 0.02\!\! & \!\!40.72 $\pm$ 0.12\!\! \\\hline
\!{\activebias{}}\!\!\! &\!\!0.62 $\pm$ 0.03\!\! & \!\!7.07 $\pm$ 0.04\!\! & \!\!27.87 $\pm$ 0.11\!\! & \!\!{\bf 0.68} $\pm$ {\bf 0.02}\!\! & \!\!14.21 $\pm$ 0.25\!\! & \!\!42.87 $\pm$ 0.05\!\!  \\\hline
\!{\algname{}}\!\!\! & \!\!{\bf0.49} $\pm$ {\bf 0.02}\!\! & \!\!{\bf 6.60} $\pm$ {\bf 0.02}\!\! & \!\!{\bf 27.05} $\pm$ {\bf 0.19}\!\! & \!\!0.99 $\pm$ 0.06\!\! & \!\!{\bf 13.18} $\pm$ {\bf 0.11}\!\! & \!\!{\bf 38.65} $\pm$ {\bf 0.11}\!\! \\\hline
\end{tabular}
\end{center}
\label{table:densenet_result}
\end{table*}
\vspace*{-0.1cm}
\begin{table*}[ht!]
\vspace*{-0.1cm}
\caption{The best test errors (\%) of four batch selection methods using {\bf ResNet}.}
\vspace*{-0.3cm}
\begin{center}
\begin{tabular}{|L{2.2cm} ||X{2.0cm} |X{2.0cm} |X{2.0cm}|| X{2.0cm} |X{2.0cm} |X{2.0cm}|}\hline
Optimizer &  \multicolumn{3}{|c||}{Momentum in Figure \ref{fig:resnet_momentum} (Appendix \ref{sec:generalization})} & \multicolumn{3}{|c|}{SGD in Figure \ref{fig:resnet_sgd} (Appendix \ref{sec:generalization})}\\\hline
Method  &  \!\!MNIST\!\! &\!\!CIFAR-10\!\! & \!\!CIFAR-100\!\! & \!\!MNIST\!\! &\!\!CIFAR-10\!\! & \!\!CIFAR-100\!\! \\\hline\hline
\!{\randombatch{}}\!\!\! & \!\!0.64 $\pm$ 0.04\!\! & \!\!10.22 $\pm$ 0.12\!\! & \!\!33.20 $\pm$ 0.07\!\! & \!\!1.16 $\pm$ 0.03\!\! & \!\!12.73 $\pm$ 0.09\!\! & \!\!40.07 $\pm$ 0.16\!\!    \\\hline
\!{\onlinebatch{}}\!\!\! & \!\!0.67 $\pm$ 0.05\!\! & \!\!10.06 $\pm$ 0.05\!\! & \!\!33.38 $\pm$ 0.01\!\! & \!\!0.89 $\pm$  0.03\!\! & \!\!12.18 $\pm$ 0.08\!\! & \!\!40.69 $\pm$ 0.09\!\! \\\hline
\!{\activebias{}}\!\!\! & \!\!0.61 $\pm$ 0.04\!\! & \!\!10.55 $\pm$ 0.08\!\! & \!\!34.19 $\pm$ 0.07\!\! & \!\!{\bf0.80} $\pm$ {\bf0.01}\!\! & \!\!13.51 $\pm$ 0.07\!\! & \!\!45.62 $\pm$ 0.07\!\!  \\\hline
\!{\algname{}}\!\!\! & \!\!{\bf0.61} $\pm$ {\bf0.01}\!\! & \!\!{\bf9.79} $\pm$ {\bf0.04}\!\! & \!\!{\bf32.43} $\pm$ {\bf0.04}\!\! & \!\!0.97 $\pm$ 0.03\!\! & \!\!{\bf11.63} $\pm$ {\bf0.09}\!\! & \!\!{\bf38.94} $\pm$ {\bf0.14}\!\! \\\hline
\end{tabular}
\end{center}
\label{table:resnet_result}
\vspace*{-0.25cm}
\end{table*}

\section{Evaluation}
\label{sec:evaluation}

We empirically show the improvement of \algname{} over not only \randombatch{}\,(baseline) but also \onlinebatch{}\,\cite{loshchilov2015online} and \activebias{}\,\cite{chang2017active}, which are two state-of-the-art adaptive batch selections. In particular, we elaborate on the effect of the sliding window approach\,(\algname{}) compared with the growing window approach\,(\activebias{}).
\randombatch{} selects next mini-batch samples uniformly at random from the entire dataset. 
\onlinebatch{} selects hard samples based on the rank of the loss computed from previous epochs. \activebias{} selects uncertain samples with high variance of true label probabilities in the growing window.
All the algorithms were implemented using TensorFlow $1.15$ and executed using a single NVIDIA Titan Volta GPU. For reproducibility, we provide the source code at \url{https://github.com/kaist-dmlab/RecencyBias}. 

Image classification and fine-tuning tasks were performed to validate the superiority of \algname{}.
Because fine-tuning is used to quickly adapt to a new dataset, it is suitable to reap the benefit of fast training speed.
In support of reliable evaluation, we repeated every task \emph{thrice} and reported the average and standard error of the best test errors. The \emph{best test error} in a given time has been widely used for the studies on fast and accurate training\,\cite{katharopoulos2018not,loshchilov2015online}. All the datasets used for experiments are summarized in Table \ref{table:datasets}.

{
\newcommand{\spacebetweenfigs}{+0.45cm}
\newcommand{\spacebeforecaption}{0.1cm}

\begin{figure*}[t]
\advance\leftskip-0.3cm 
\begin{subfigure}[t!]{1.0\textwidth}
\begin{center}
\includegraphics[width=0.7\textwidth]{figure/experiments/label}
\vspace*{0.1cm}
\end{center}
\end{subfigure}
\begin{center}
\begin{subfigure}[t!]{0.38\textwidth}
\includegraphics[width=1.0\textwidth]{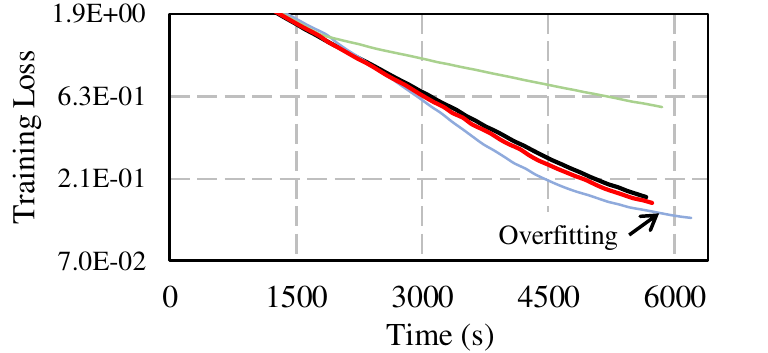}
\vspace*{\spacebeforecaption}
\end{subfigure}
\hspace*{\spacebetweenfigs}
\begin{subfigure}[t!]{0.38\textwidth}
\includegraphics[width=1.0\textwidth]{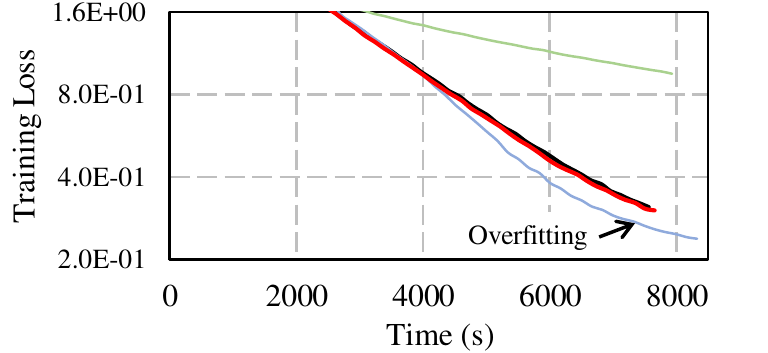}
\vspace*{\spacebeforecaption}
\end{subfigure}
\end{center}
\vspace*{-0.4cm}
\hspace*{1.26cm} {\small (a) MIT-67 Training Loss.} \hspace*{4.17cm} {\small (b) Food-100 Training Loss.}
\vspace*{0.1cm}
\begin{center}
\begin{subfigure}[t!]{0.38\textwidth}
\includegraphics[width=1.0\textwidth]{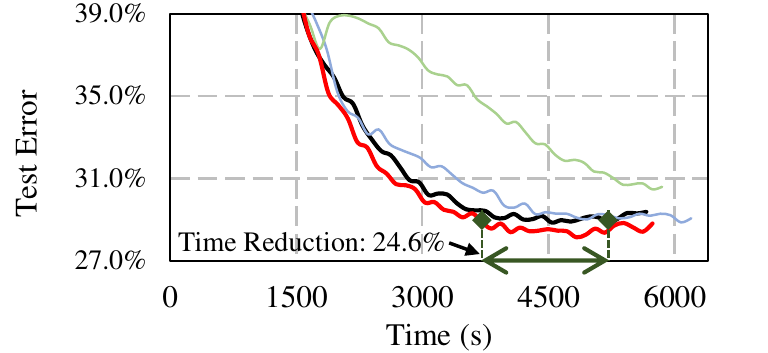}
\vspace*{\spacebeforecaption}
\end{subfigure}
\hspace*{\spacebetweenfigs}
\begin{subfigure}[t!]{0.38\textwidth}
\includegraphics[width=1.0\textwidth]{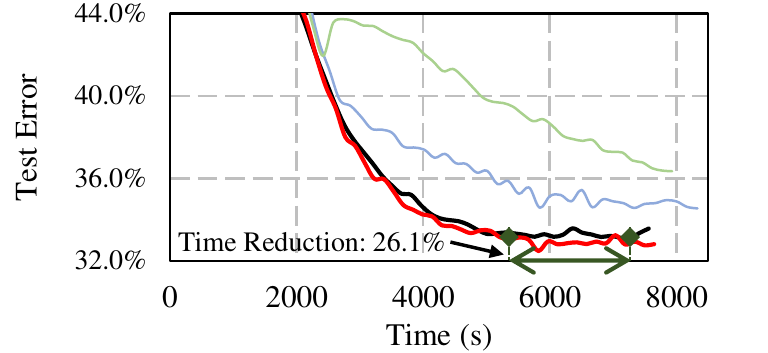}
\vspace*{\spacebeforecaption}
\end{subfigure}
\end{center}
\vspace*{-0.4cm}
\hspace*{1.28cm} {\small (c) MIT-67 Test Error} \hspace*{4.67cm} {\small (d) Food-100 Test Error.}
\vspace*{-0.3cm}
\caption{Convergence curves for fine-tuning on two benchmark datasets.}
\label{fig:fine_tune}
\vspace*{-0.3cm}
\end{figure*}
}

\subsection{Analysis on Selected Mini-batch Samples}
\label{sec:analysis_sample}

For an in-depth analysis on selected samples, we plot the loss distribution of mini-batch samples selected from CIFAR-10 by four different methods in Figure \ref{fig:loss_distribution}.
\emph{(i)} The distribution of \onlinebatch{} is the most skewed toward high loss by the design principle of selecting hard samples. 
\emph{(ii)} \activebias{} emphasizes \emph{moderately hard} samples at an early training stage in considering that its loss distribution lies between those of \emph{Random Batch} and \emph{Online Batch}. However, owing to the outdated predictions caused by the growing window, the proportion of easy samples with low loss increases at a late training stage. These easy samples, which are misclassified as uncertain at that stage, tend to make the convergence of training slow down. 
\emph{(iii)} In contrast to \activebias{}, by virtue of the sliding window, the distribution of \algname{} lies between those of \randombatch{} and \onlinebatch{} \textrm{regardless of the training stage}. Consequently, \algname{} continues to highlight the moderately hard samples, which are likely to be informative, during the training process. \looseness=-1

\subsection{Task I: Image Classification}
\label{sec:eval_classification}

\subsubsection{Experiment Setting}
We trained DenseNet (L=40, k=12) and ResNet (L=50) with a momentum optimizer and an SGD optimizer on \emph{three} benchmark datasets: MNIST\,(10 classes)\footnote{\url{http://yann.lecun.com/exdb/mnist}}, classification of handwritten digits\,\cite{lecun1998mnist}, and CIFAR-10\,($10$ classes)\footnote{\url{https://www.cs.toronto.edu/~kriz/cifar.html}} and CIFAR-100\,($100$ classes)\footnotemark[3], classification of a subset of $80$ million categorical images\,\cite{krizhevsky2014cifar}.
For the classification task, we used data augmentation, batch normalization, a dropout of $0.2$, a momentum of $0.9$, and a batch size of $128$. 
Regarding the algorithm parameters, we fixed the window size $q=10$ and the initial selection pressure $s_{e_0}=100$,\footnote{\onlinebatch{} also used the same decaying strategy.} which were the best values found by the grid search in Section \ref{sec:param_sel}. The warm-up epoch $\gamma$ was set to be $10$. To reduce the performance variance caused by randomly initialized model parameters, all parameters were shared by all methods during the warm-up period.
Regarding the training schedule, we trained the network for $40,000$ iterations and used an initial learning rate of $0.1$, which was divided by $10$ at $50\%$ and $75\%$ of the total number of training iterations. 

\subsubsection{Results}
Figure \ref{fig:densenet_momentum} shows the convergence curves of training loss and test error for four batch selection methods using DenseNet and a momentum optimizer. 
In order to highlight the improvement of \algname{} over the baseline\,(\randombatch{}), their lines are dark colored.
The best test errors in Figures \ref{fig:densenet_momentum}(d), \ref{fig:densenet_momentum}(e), and \ref{fig:densenet_momentum}(f) are summarized on the left side of Table \ref{table:densenet_result}. \looseness=-1

In general, \algname{} achieved the most accurate network while accelerating the training process on all datasets.
The training loss of \algname{} converged faster (Figures \ref{fig:densenet_momentum}(a), \ref{fig:densenet_momentum}(b), and \ref{fig:densenet_momentum}(c)) without the increase in the generalization error, thereby achieving the lowest test error (Figures \ref{fig:densenet_momentum}(d), \ref{fig:densenet_momentum}(e), and \ref{fig:densenet_momentum}(f)). In contrast, the test error of \onlinebatch{} was not the best even if its training loss converged the fastest among all methods. As the training difficulty increased from CIFAR-10 to CIFAR-100, the test error of \onlinebatch{} became even worse than that of \randombatch{}. 
That is, emphasizing hard training samples rather worsened the generalization capability of the network for the hard dataset\,(i.e., CIFAR-100) because of the overfitting. 
On the other hand, \algname{} expedited the training step as well as achieved the lowest test error even for the hard dataset.
Meanwhile, \activebias{} was prone to make the network better generalized on test data. In CIFAR-10, despite its highest training loss, the test error of \activebias{} was better than that of \randombatch{}. However, \activebias{} slowed down the training process because of the limitation of growing windows, as discussed in Section \ref{sec:analysis_sample}.
We note that, although both \algname{} and \activebias{} exploited uncertain samples, only \algname{} based on sliding windows succeeded to not only speed up the training process but also reduce the generalization error. 
Quantitatively, \algname{} achieved a significant reduction in test error of $3.22$\%--$9.96\%$ compared with \randombatch{}, $4.72\%$--$5.71\%$ compared with \onlinebatch{}, and $2.94\%$--$20.97\%$ compared with \activebias{}.

\begin{table}[t!]
\caption{The best test errors (\%) of four batch selection strategies using DenseNet in Figure \ref{fig:fine_tune}.}
\vspace*{-0.3cm}
\begin{center}
\begin{tabular}{|L{2.2cm} ||X{2.3cm} |X{2.3cm}|}\hline
Method  &  \!\!MIT-67\!\! &\!\!Food-100\!\! \\\hline\hline
\!{\randombatch{}}\!\!\! & \!\!28.85 $\pm$ 0.26\!\! & \!\!33.07 $\pm$ 0.60\!\! \\\hline
\!{\onlinebatch{}}\!\!\! & \!\!28.87 $\pm$ 0.55\!\! & \!\!34.54 $\pm$ 0.24\!\! \\\hline
\!{\activebias{}}\!\!\! &\!\!30.46 $\pm$ 0.31\!\! & \!\!36.35 $\pm$ 0.47\!\!  \\\hline
\!{\algname{}}\!\!\! & \!\!{\bf 28.02} $\pm$ {\bf 0.32}\!\! & \!\!{\bf 32.47} $\pm$ {\bf 0.61}\!\! \\\hline
\end{tabular}
\end{center}
\label{table:finetune}
\vspace*{-0.2cm}
\end{table}

\begin{table*}[t!]
\vspace*{-0.20cm}
\caption{\textbf{Recency~Bias}'s reduction in training time over other batch selection methods.}
\vspace*{-0.3cm}
\begin{center}
\begin{tabular}{|L{2.2cm} ||X{6cm} |X{6cm}|}\hline
Method  &  MIT-67 & FOOD-100 \\ \hline\hline
\!{\randombatch{}}\!\!\! & \!\!$(5,218-3,936)/5,218\times100={\bf 24.57\%}$\!\! & \!\!$(7,263-5,365)/7,263\times100={\bf 26.13}\%$\!\! \\\hline
\!{\onlinebatch{}}\!\!\! & \!\!$(6,079-3,823)/6,079\times100={\bf 37.11\%}$\!\! & \!\!$(8,333-3,685)/8,333\times100={\bf 55.78}\%$\!\! \\\hline
\!{\activebias{}}\!\!\! & \!\!$(5,738-3,032)/5,738\times100={\bf 47.16\%}$\!\! & \!\!$(7,933-3,227)/7,933\times100={\bf 59.32\%}$\!\!  \\\hline
\end{tabular}
\end{center}
\label{table:time_reduction}
\vspace*{-0.05cm}
\end{table*}

\begin{table*}[ht!]
\begin{center}
\parbox{.5\textwidth}{%
\parbox{8.4cm}{
\caption{The converged training loss of \algname{} with the four strategies of decaying the selection pressure.}
\vspace*{-0.2cm} \label{table:ablation_training_loss}
}
\begin{tabular}{|L{3.4cm} || X{1.95cm}| X{1.95cm} |}\hline
Metric &  \multicolumn{2}{|c|}{Training Loss} \\\hline
Decaying Strategy  &  \!\!CIFAR-10\!\! & \!\!CIFAR-100\!\! \\\hline\hline
Strategy 1 ($s_e=10$) & $0.0059$ & $0.2172 $  \\\hline
Strategy 2 ($s_e=100$) & $0.0036$  & $0.1757 $ \\\hline
Strategy 3 ($s_e=10 \rightarrow 1$)\!\! & $0.0120 $ & $0.2539$ \\\hline
Strategy 4 ($s_e=100 \rightarrow 1$)\!\! & $0.0060 $ & $0.2207$ \\\hline
\end{tabular}
}%
\hspace*{0.4cm}
\parbox{.5\textwidth}{%
\parbox{8.4cm}{
\caption{The best test error\,(\%) of \algname{} with the four strategies of decaying the selection pressure.}
\vspace*{-0.2cm}
\label{table:ablation_test_error}
}
\begin{tabular}{|L{3.4cm} || X{1.95cm}| X{1.95cm} |}\hline
Metric &  \multicolumn{2}{|c|}{Test Error}\\\hline
Decaying Strategy  & \!\!CIFAR-10\!\! & \!\!CIFAR-100\!\! \\\hline\hline
Strategy 1 ($s_e=10$) & $6.81 \pm 0.02$ & $27.84 \pm 0.17$ \\\hline
Strategy 2 ($s_e=100$) & $6.62 \pm 0.04$ & $28.19 \pm 0.15$ \\\hline
Strategy 3 ($s_e=10 \rightarrow 1$)\!\! & $7.13 \pm 0.03$ & $27.47 \pm 0.22$ \\\hline
Strategy 4 ($s_e=100 \rightarrow 1$)\!\! & $6.60 \pm 0.02$ & $27.05 \pm 0.19$  \\\hline
\end{tabular}
}%
\end{center}
\vspace*{-0.1cm}
\end{table*}

The results of the best test error for ResNet or an SGD optimizer are summarized in Tables \ref{table:densenet_result} and \ref{table:resnet_result} (see Appendix \ref{sec:generalization} for more details). Regardless of a neural network and an optimizer, \algname{} achieved the lowest test error except in MNIST with an SGD optimizer.
The improvement of \algname{} over the others was higher with an SGD optimizer than with a momentum optimizer. \looseness=-1

\vspace*{-0.05cm}
\subsection{Task II: Fine-Tuning}
\label{sec:eval_fine_tuning}
\subsubsection{Experiment Setting} 
Our second experiment was fine-tuning a pretrained network on new datasets. Because there exist many powerful pretrained networks on large datasets, this task is an important application to verify the usefulness of fast training.
We prepared DenseNet\,(L=121, k=32) previously trained on ImageNet\,\cite{deng2009imagenet} and then fine-tuned the network on \emph{two} benchmark datasets: MIT-67\,($67$ classes)\footnote{\url{http://web.mit.edu/torralba/www/indoor.html}}, classification of indoor scenes\,\cite{quattoni2009recognizing}, and Food-100 ($100$ classes)\footnote{\url{http://foodcam.mobi/dataset100.html}}, classification of popular foods in Japan\,\cite{kawano14b}. As summarized in Table \ref{table:datasets}, all training and testing images in both datasets were resized to $256\times256$, which is the original input size of the pretrained network.
For the fine-tuning task, the network was trained end-to-end for $50$ epochs with a batch size $32$ and a constant learning rate $2\times10^{-4}$ after replacing the last classification layer. The other configurations were the same as those in Section \ref{sec:eval_classification}. \looseness=-1

\subsubsection{Results on Test Error}
Figure \ref{fig:fine_tune} shows the convergence curves of training loss and test error for the fine-tuning task on MIT-67 and Food-100. The best test errors in Figure \ref{fig:fine_tune} are summarized in Table \ref{table:finetune}. Overall, all convergence curves showed similar trends to those of the classification task in Figure \ref{fig:densenet_momentum}. Only \algname{} converged faster than \randombatch{} in both training loss and test error.
\onlinebatch{} converged the fastest in training loss, but its test error was rather higher than \randombatch{} owing to the overfitting. \activebias{} converged the slowest in both training loss and test error. Quantitatively, compared with \randombatch{}, \algname{} reduced the test error by $2.88\%$ and $1.81\%$ in MIT-67 and Food-100, respectively. \looseness=-1

\subsubsection{Results on Training Time}
Moreover, to assess the performance gain in training time, we computed the reduction in the training time taken to reach the same error. For example, in Figure\,\ref{fig:fine_tune}(c), the best test error of $28.85\%$ achieved in $5,218$ seconds by \randombatch{} could be achieved only in $3,936$ seconds by \algname{}; thus, \algname{} improved the training time by $24.6\%$.
Table \ref{table:time_reduction} summarizes the reduction in the training time of \algname{} over three other batch selection methods. Notably, \algname{} improved the training time by $24.57\%$--$47.16\%$ and $26.13\%$--$59.32\%$ in fine-tuning MIT-67 and FOOD-100 datasets, respectively.

\subsection{Ablation  Study on Selection Pressure}
\label{sec:abalation}

To examine the effect of decaying the selection pressure, we conducted additional ablation experiments on \emph{four} different strategies of decaying the $s_e$ value, as follows:
\vspace*{-0.0cm}
\begin{enumerate}[label={\arabic*.}, leftmargin=12pt] 
\item \textbf{$\boldsymbol{s_e= 10}$}: A \emph{wide range} of uncertain training samples were selected during the \emph{entire} training process because $s_e$ was set to be a small constant value.
\vspace*{0.05cm}
\item \textbf{$\boldsymbol{s_e= 100}$}: Differently to Strategy 1, only \emph{highly} uncertain training samples were highlighted during the \emph{entire} training process because $s_e$ was set to be a large constant value. 
\vspace*{0.05cm}
\item \textbf{$\boldsymbol{s_e= 10 \rightarrow 1}$}: This strategy starts from Strategy 1, but, as the training progresses, more diverse training samples were chosen regardless of their uncertainty due to the exponential decay of  $s_e$ from $10$ to $1$.
\vspace*{0.05cm}
\item \textbf{$\boldsymbol{s_e= 100 \rightarrow 1}$}: This strategy is similar to Strategy 3, but the initial value of $s_e$ was made much larger to further emphasize highly uncertain samples during the training process.
\end{enumerate}

DenseNet (L=40, k=12) was trained on two CIFAR datasets using \algname{} with these four strategies. A momentum optimizer was used, and the other configurations were the same as those in Section \ref{sec:eval_classification}. Tables \ref{table:ablation_training_loss} and \ref{table:ablation_test_error} summarize the converged training loss and the best test error, respectively, of the four strategies.

A lower training loss was generally achieved when the selection pressure was not decayed (see Strategy 1 and Strategy 2 in Table \ref{table:ablation_training_loss}) because moderately hard training samples were consistently emphasized during the entire training process. Nevertheless, because using only a \emph{part} of training data exacerbated the overfitting problem as mentioned in Section \ref{sec:sample_prob}, the test error of Strategy 1\,(or Strategy 2) was generally worse than that of Strategy 3\,(or Strategy 4) as shown in Table \ref{table:ablation_test_error}. The overfitting problem may become severer with Strategy 2 than Strategy 1 as witnessed by the test error for CIFAR-100. As opposed to when using a constant selection pressure, a larger initial value was beneficial when decaying the selection pressure. That is, Strategy 4 achieved the lower test error than Strategy 3 in both datasets. Overall, the best test error was always achieved by Strategy 4, which corroborates the importance of decaying the selection pressure to exploit more diverse training samples especially at a late stage of training. \looseness=-1

\begin{figure}[t!]
\begin{center}
\includegraphics[width=8.6cm]{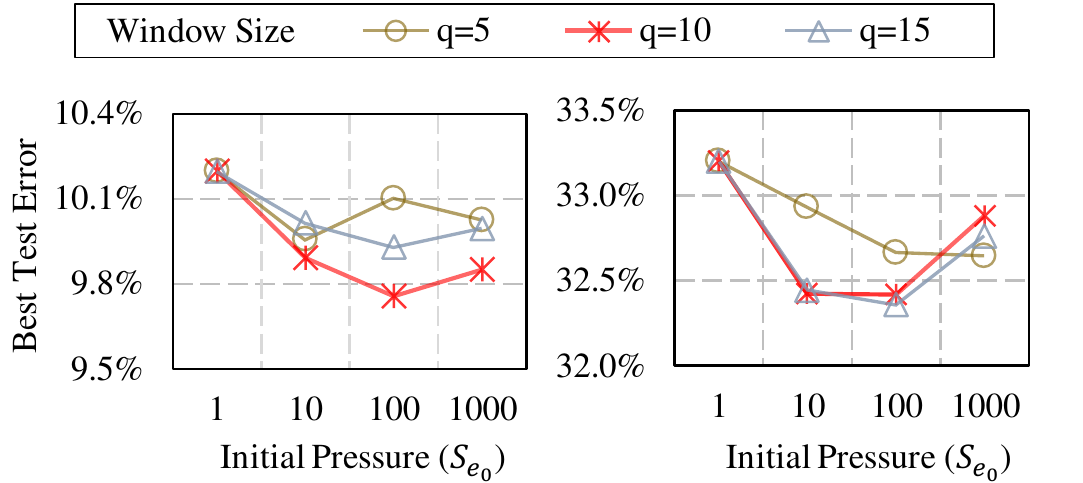}
\end{center}
\vspace*{-0.075cm}
\hspace*{1.2cm} {\small (a) CIFAR-10.}  \hspace*{2.25cm} {\small (b) CIFAR-100.}  
\vspace*{-0.275cm}
\caption{Grid search on CIFAR-10 and CIFAR-100 datasets using ResNet.}
\label{fig:grid_search}
\vspace*{-0.175cm}
\end{figure}

\subsection{Hyperparameter Selection}
\label{sec:param_sel}

\algname{} receives the two hyperparameters: \emph{(i)} the initial selection pressure $s_{e_0}$ that determines the sampling probability gap between the most and the least uncertain samples and \emph{(ii)} the window size $q$ that determines how many recent label predictions are involved in predicting the uncertainty. To decide the best hyperparameters, we trained ResNet\,(L=50) on CIFAR-10 and CIFAR-100 with a momentum optimizer. For hyperparameters selection, the two hyperparameters were chosen in a grid $s_{e_0} \in \{1, 10, 100, 1000\}$ and $q \in \{5, 10, 15\}$.

Figure \ref{fig:grid_search} shows the test errors of \algname{} obtained by the grid search on the two datasets. Regarding the initial selection pressure $s_{e_0}$, the lowest test error was typically achieved when the $s_{e_0}$ value was $100$. As for the window size $q$, the test error was almost always the lowest when the $q$ value was $10$. Similar trends were observed for the other combinations of a neural network and an optimizer. Therefore, in all experiments, we set $s_{e_0}$ to be $100$ and $q$ to be $10$.

\section{Conclusion}
\label{sec:conclusion}
In this paper, we presented a novel adaptive batch selection algorithm called \algname{} that emphasizes predictively uncertain samples for accelerating the training of neural networks. Toward this goal, the predictive uncertainty of each sample is evaluated using its \emph{recent} label predictions managed by a sliding window of a fixed size. Then, uncertain samples \emph{at the moment} are selected with high probability for the next mini-batch. 
We conducted extensive experiments on both classification and fine-tuning tasks. The results showed that \algname{} is effective in reducing the training time as well as the best test error. It was worthwhile to note that using \emph{all} historical observations to estimate the uncertainty has the side effect of slowing down the training process. Overall, a merger of uncertain samples and sliding windows greatly improves the power of adaptive batch selection.

\section*{Acknowledgement}
This work was partly supported by Institute of Information \& Communications Technology Planning \& Evaluation\,(IITP) grant funded by the Korea government\,(MSIT) (No.\ 2020-0-00862, DB4DL: High-Usability and Performance In-Memory Distributed DBMS for Deep Learning)
and
the National Research Foundation of Korea\,(NRF) grant funded by the Korea government\,(Ministry of Science and ICT) (No.\ 2017R1E1A1A01075927).  

\bibliography{0-paper}

\begin{appendix}

\vspace*{0.7cm}
\newcommand{\spacebetweenfigs}{-0.45cm}
\newcommand{\spacebeforecaption}{0.2cm}
\noindent
\begin{minipage}[c]{\textwidth}
\centering
\includegraphics[width=0.7\textwidth]{figure/experiments/label} 
\includegraphics[width=0.34\textwidth]{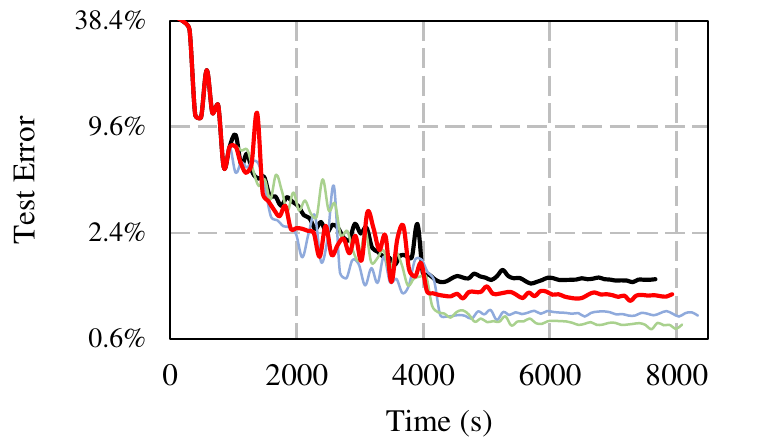}
\hspace*{\spacebetweenfigs}
\includegraphics[width=0.34\textwidth]{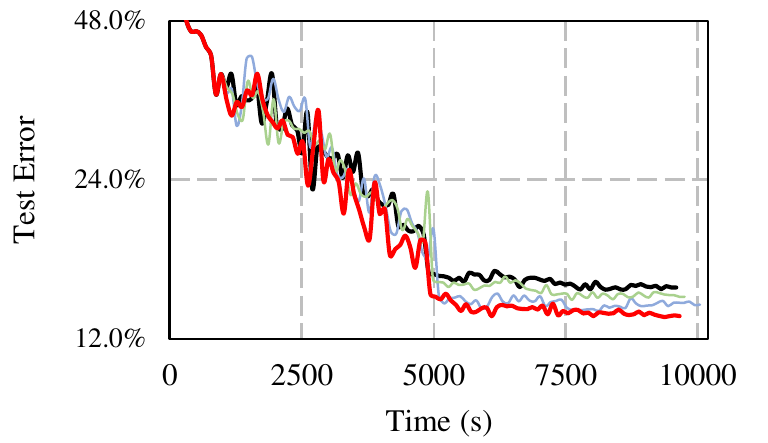}
\hspace*{\spacebetweenfigs}
\includegraphics[width=0.34\textwidth]{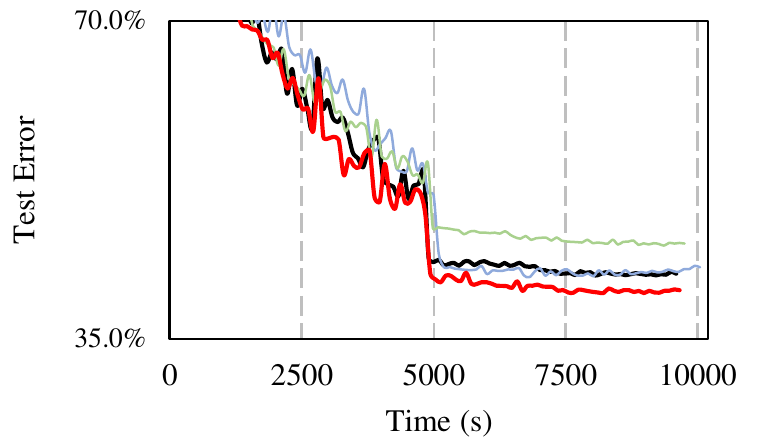}
\hspace*{0.95cm} {\small (a) MNIST Test Error.} \hspace*{2.95cm} {\small (b) CIFAR-10 Test Error.}  \hspace*{2.8cm}  {\small (c) CIFAR-100 Test Error.}
\vspace*{-0.25cm}
\captionof{figure}{Convergence curves of four batch selection strategies using \enquote{DenseNet with SGD} (log-scale).}
\vspace*{0.25cm}
\label{fig:densenet_sgd}
\end{minipage}
\begin{minipage}[c]{\textwidth}
\centering
\includegraphics[width=0.34\textwidth]{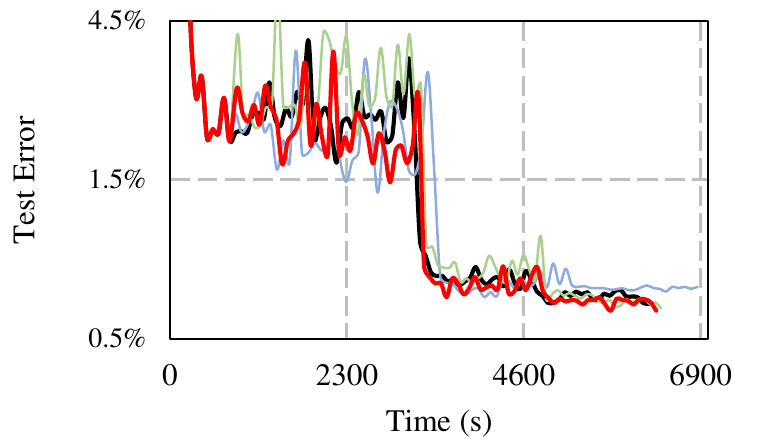}
\hspace*{\spacebetweenfigs}
\includegraphics[width=0.34\textwidth]{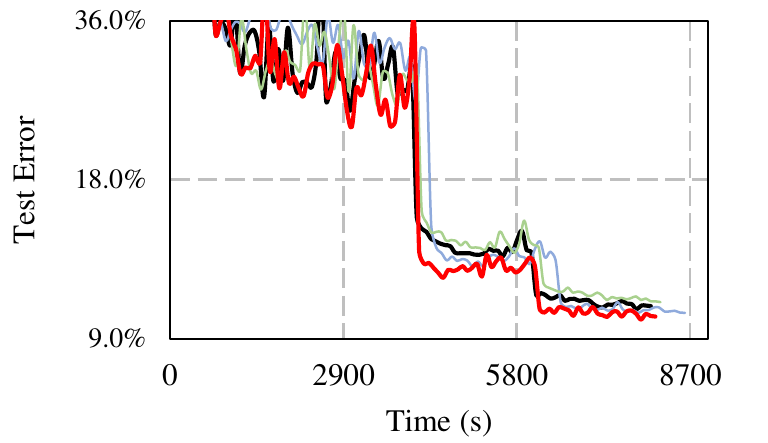}
\hspace*{\spacebetweenfigs}
\includegraphics[width=0.34\textwidth]{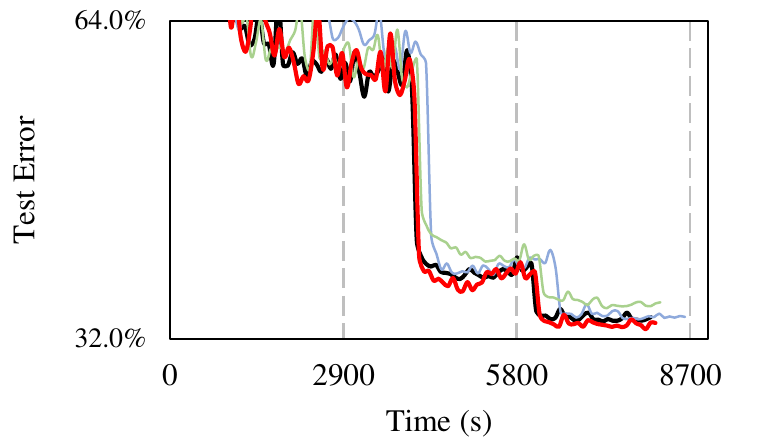}
\hspace*{0.95cm} {\small (a) MNIST Test Error.} \hspace*{2.95cm} {\small (b) CIFAR-10 Test Error.}  \hspace*{2.8cm}  {\small (c) CIFAR-100 Test Error.}
\vspace*{-0.25cm}
\captionof{figure}{Convergence curves of four batch selection strategies using \enquote{ResNet with momentum} (log-scale).}
\vspace*{0.25cm}
\label{fig:resnet_momentum}
\end{minipage}
\begin{minipage}[c]{\textwidth}
\centering
\includegraphics[width=0.34\textwidth]{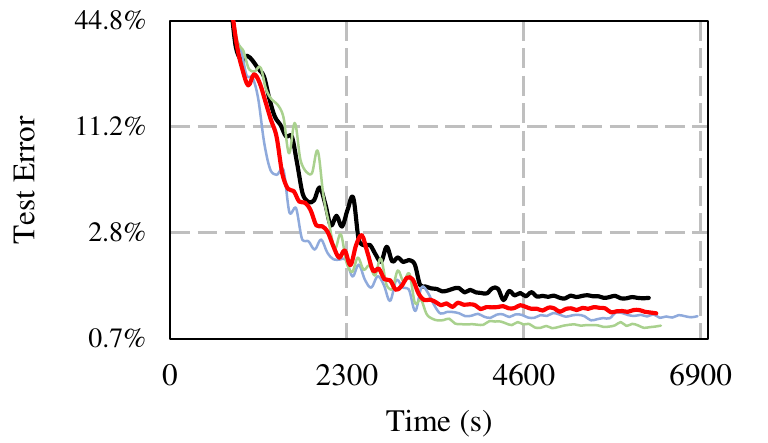}
\hspace*{\spacebetweenfigs}
\includegraphics[width=0.34\textwidth]{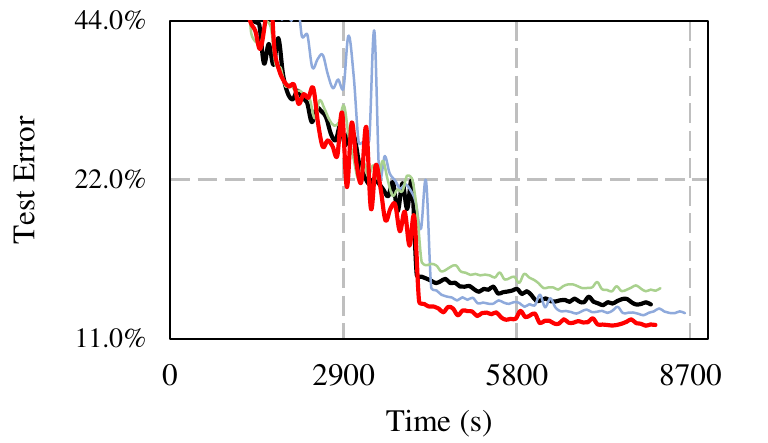}
\hspace*{\spacebetweenfigs}
\includegraphics[width=0.34\textwidth]{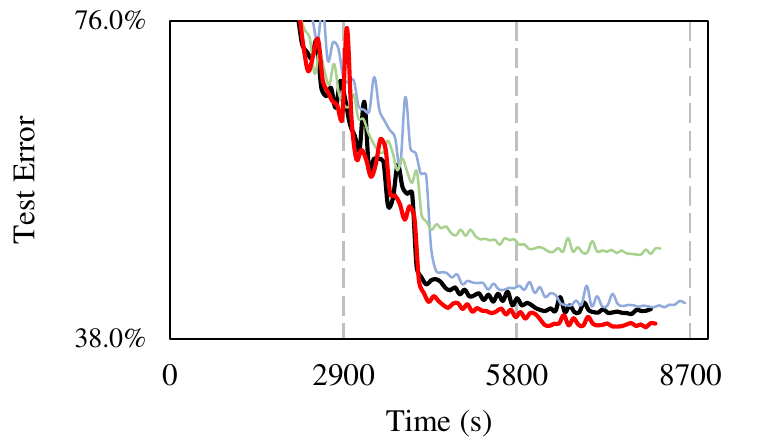}
\hspace*{0.95cm} {\small (a) MNIST Test Error.} \hspace*{2.95cm} {\small (b) CIFAR-10 Test Error.}  \hspace*{2.8cm}  {\small (c) CIFAR-100 Test Error.}
\vspace*{-0.25cm}
\captionof{figure}{Convergence curves of four batch selection strategies using \enquote{ResNet with SGD} (log-scale).}
\label{fig:resnet_sgd}
\end{minipage}

\section{Generalizability of \algname{}}
\label{sec:generalization}
\vspace*{0.1cm}

Figures \ref{fig:densenet_sgd}, \ref{fig:resnet_momentum}, and \ref{fig:resnet_sgd} show the convergence curves of test error for the four batch selection strategies using \enquote{DenseNet and an SGD optimizer}\,(see the right side of Table \ref{table:densenet_result}), \enquote{ResNet and a momentum optimizer}\,(see the left side of Table \ref{table:resnet_result}), and \enquote{ResNet and an SGD optimizer}\,(see the right side of Table \ref{table:resnet_result}), respectively. 

The performance dominance of \algname{} was generally consistent regardless of an optimizer and a network architecture. Except in MNIST with an SGD optimizer\,(Figure \ref{fig:densenet_sgd}(a) and Figure \ref{fig:resnet_sgd}(a)), \algname{} achieved a significant reduction in test error of $2.32\%$--$19.51\%$ compared with 
\randombatch{}, $2.51\%$--$8.96\%$ compared with \onlinebatch{}, and $5.15\%$--$14.64\%$ compared with \activebias{}, respectively.
Therefore, based on this confirmed generalizability, we expect that \algname{} can be smoothly applied for a new emerging optimizer and/or network architecture.

\end{appendix}

\end{document}